\newtheorem{example}{Example}
\newtcolorbox[auto counter]{conversation}[2][]
  {
   colback=gray!5.5!white,
   colframe=black!65!black, 
   fonttitle=\bfseries,
   fontupper=\sffamily\fontsize{7.5pt}{10.5pt}\selectfont,
   colbacktitle=gray!5.5!white, enhanced,
   coltitle=black,
   attach boxed title to top left={yshift=-2.5mm, xshift=4mm},
   title=#2, boxrule=0.3pt, #1,
   rounded corners, arc=2mm,
   boxed title style={boxrule=0.3pt, rounded corners, arc=2mm},
   label type=table
   }
\DeclareMathOperator*{\argmax}{arg\,max}
\newtheorem{theorem}{Theorem}
\newtheorem{lemma}[theorem]{Lemma}
\newtheorem{proposition}[theorem]{Proposition}
\newcommand{\BibTeX}{B\kern-.05em{\sc i\kern-.025em b}\kern-.08em\TeX}
\begin{document}


\begin{frontmatter}


\paperid{785} 


\title{Balancing Act: Prioritization Strategies for LLM-Designed
Restless Bandit Rewards}


\author[1]{\fnms{Shresth}~\snm{Verma}\thanks{Equal contribution.}}
\author[2]{\fnms{Niclas}~\snm{Boehmer}$^*$}
\author[1]{\fnms{Lingkai}~\snm{Kong}} 
\author[1]
{\fnms{Milind}~\snm{Tambe}} 

\address[1]{Harvard University, USA}
\address[2]{Hasso Plattner Institute, Germany}


\begin{abstract}
\textcolor{black}{LLMs are increasingly used to design reward functions based on human preferences in multiagent Reinforcement Learning (RL)}. We focus on LLM-designed rewards for Restless Multi-Armed Bandits, a framework for allocating limited resources among agents. In applications such as public health, this approach empowers grassroots health workers to tailor automated allocation decisions to community needs. 
In the presence of multiple agents, altering the reward function based on human preferences can impact subpopulations very differently, leading to complex tradeoffs and a multi-objective resource allocation problem.
We are the first to present a principled method termed \emph{Social Choice Language Model} for dealing with these tradeoffs for LLM-designed rewards for multiagent planners in general and restless bandits in particular.  
The novel part of our model is a transparent and configurable selection component, called an \textit{adjudicator}, external to the LLM that controls complex tradeoffs via a user-selected social welfare function. 
Our experiments demonstrate that our 
model reliably selects more effective, aligned, and balanced reward functions compared to purely LLM-based approaches. 


\end{abstract}

\end{frontmatter}


\section{Introduction} \label{intro}
Reward functions play a fundamental role in the generation of optimal policies for sequential decision-making via reinforcement learning. 
Previous work has shown that LLMs are an effective tool for designing reward functions that can be guided and customized via human language prompts \cite{DBLP:journals/corr/abs-2310-12931,DBLP:journals/corr/abs-2404-00282,DBLP:conf/iclr/KwonXBS23,DBLP:conf/iclr/XieZWLLZY024,DBLP:journals/corr/abs-2406-01967,DBLP:conf/corl/0003GFKLACEHHIX23,DBLP:journals/corr/abs-2406-01309}. 
\textcolor{black}{Focusing on optimization and planning scenarios, we study the problem of designing high-quality reward functions aligned with human preference prompts in a \emph{multiagent} context, rendering  the underlying problem inherently multi-objective.}
We present a transparent framework around LLMs that constructs effective, aligned, and balanced reward functions for complex human prompts.

We study the reward design problem for restless multi-armed bandits (RMABs), a popular model in multiagent systems for sequentially allocating a limited number of resources to a set of agents \cite{whittle1988restless,nino2023markovian}.
In RMABs, \textcolor{black}{there are multiple, independently evolving agents, with each agent being represented by an individual Markov Decision Process including a reward function}. 
By choosing these reward functions, one can control which agents are more or less likely to receive a resource. 
RMABs have been \textcolor{black}{applied to multiagent problems in various domains}  such as machine maintenance \cite{DBLP:journals/informs/AbbouM19}, anti-poaching \cite{DBLP:conf/atal/QianZKT16}, and healthcare \cite{ayer2019prioritizing,DBLP:journals/aim/VermaSMVGMHTJTT23}. 
In many of them, system organizers have evolving allocation priorities based on agents' features that need to be incorporated into the resource allocation process \cite{deardorff2018strategies,group}. 
For instance, in a healthcare program, 
a healthcare worker might
want to change the allocation policy to prioritize
low-income beneficiaries who are at higher risk or older beneficiaries who have transportation barriers for healthcare access   
\cite{nelson2016disparities,syed2013traveling} 
\textcolor{black}{via the following preference prompt}:
\emph{Prioritize low-income beneficiaries and older beneficiaries}.


\color{black}Unfortunately, handcrafting reward functions is often a challenging and time-consuming task for humans because of the complex relationship between reward functions and policy outcomes \cite{DBLP:journals/corr/abs-2406-01309,DBLP:journals/corr/abs-2404-00282,DBLP:journals/corr/abs-2310-12931}. \color{black}
Further, the multiagent nature of the RMAB problem adds a new twist to the problem of reward design in RL: It becomes fundamentally \emph{multi-objective}.
Consider the above example prompt asking for the prioritization of two subpopulations. 
As these subpopulations may contain different agents, selecting a reward function will most likely involve trading off the interests of the low-income vs.\ older beneficiaries, making this a multi-objective problem. 
If this multi-objective nature is ignored, a selected reward function might heavily favor one of the two groups (e.g., leading to the allocation of many resources to low-income beneficiaries, and no resources to older ones). 

This problem of multi-objective reward function modification even extends beyond Restless Multi-Armed Bandits, to challenges relevant in using Game Theory and AI for security, for instance, in Stackelberg Security Games (SSGs) applications \cite{tambe2011security}. In SSGs, optimal security strategies are highly sensitive to the defined payoff functions, which typically reflect the defender's and attacker's objectives. Using a natural language interface offers defenders to refine and balance these payoff functions when faced with conflicting objectives can act as a powerful tool. This is in contrast with traditional iterative and manual adjustments presented in prior SSG applications. For instance, the ARMOR system deployed at LAX airport for security resource allocation \cite{pita2008deployed} originally included a graphical user interface to enable payoff adjustments, but this interface was rudimentary. A natural language interface, as proposed in our work, could significantly enhance the practical utility of SSGs by allowing security personnel to express nuanced objectives (e.g., "prioritize slightly more air marshals on routes to Tokyo and Paris" using a system like IRIS \cite{tsai2009iris}) directly, and have these preferences be translated into balanced and effective security strategies. This capability addresses a critical need for more flexible and human-centric control over complex security resource allocation.

To our knowledge, we are the first to address the multi-objective nature of LLM-powered reward design in RMABs in particular and multiagent planners in general. 
Closest to our paper is 
the work by 
\citet{DBLP:journals/corr/abs-2402-14807} who proposed a fully LLM-based Decision-Language Model for RMABs to generate and select reward functions (as code) from human language prompts.
However, as argued in \Cref{rw,sec:Prob}, the DLM model is not properly equipped to handle the multi-objective nature of the problem, as the LLM selects functions in an unpredictable, hard-to-control and sometimes (clearly) suboptimal way that does not adequately take into account and balance the different objectives. 

We present a Social Choice Language Model (SCLM) that designs reward functions (as Python code) aligned with complex, multi-objective human language preferences; see \Cref{fig:ov} for an overview of SCLM. 
Our pipeline separates the generation of candidate reward functions in the \emph{generator} from the selection of one function in the \emph{adjudicator}.
For the generator, we use LLM-powered evolutionary search to generate a pool of reward functions \cite{DBLP:journals/corr/abs-2402-14807}.  
In the transparent and customizable adjudicator, we take a new social choice perspective to address the multi-objective nature of our problem: 
We create a scorer component that evaluates the quality of generated reward functions according to the different objectives (e.g., different prioritization requests).
Subsequently, a social welfare function aggregates these ``alignment scores'' to select the best reward function. 
The user can select the social welfare function and thereby has additional control over the preferred trade-off between objectives, e.g., maximizing the summed vs.\ minimum alignment of all objectives. 
We show that SCLM returns high-quality reward functions even if the computed alignment scores are noisy.
In our experiments, we demonstrate that  SCLM leads to the selection of reward functions significantly better aligned with complex, multi-objective prompts. 
Moreover, we also show how it can be used to effectively mitigate the risks of using rewards designed from human prompts: unintended effects for other agents and the ineffective allocation of resources.  
\emph{Overall, SCLM combines the generative power of LLMs to design reward functions with the capabilities of social choice to handle multi-objective decision-making scenarios.}

\section{Related Works}\label{rw}

\paragraph{LLM-enhanced RL}
LLMs have emerged as a powerful tool to enhance RL. Recent work has used LLMs to generate reward functions based on natural language descriptions~\cite{DBLP:journals/corr/abs-2310-12931, xie2024textreward, DBLP:conf/corl/0003GFKLACEHHIX23}. For instance, 
\citet{goyal2019using, carta2022eager, mirchandani2021ella,DBLP:journals/corr/abs-2406-01309}
shape rewards by training
an RL agent to learn and complete intermediate tasks guided by language, yet focusing on very different (non-multiagent)~environments.

The work of \citet{DBLP:journals/corr/abs-2402-14807}  is the first to present a Decision-Language Model for generating reward functions for RMABs from human prompts.
The model performs a form of evolutionary search to find reward functions aligned with the given prompt in two interleaving phases: generation and reflection. 
In the generation phase, an LLM generates a set of reward functions. Based on reward function's performances, in the reflection phase \cite{DBLP:journals/corr/abs-2310-12931,DBLP:conf/nips/ShinnCGNY23}, the LLM selects the function best aligned with the prompt. 
This function is then included in the prompt for the next generation phase or returned. 
In contrast to our work, DLM mixes generation with selection and does not explicitly account for the multi-objective nature of the reward design problem. 
Furthermore, in contrast to our work, they focus on small RMAB instances ($\sim 20$ arms).
Throughout the paper, we will use a slightly modified variant of DLM adjusted to our setting (see  \Cref{app:DLM}) as a baseline. 



\paragraph{Multi-Objective Reinforcement Learning (MORL)}
Research on MORL focuses on learning policies that maximize (and balance between) multiple objective functions, typically via scalarizing the objectives into a single reward function \cite{DBLP:conf/emo/MoffaertDN13} or approximating the Pareto front~\cite{roijers2013survey, van2014multi,kim2025navigating}. 
In the context of multiagent systems, MORL has been used as a method to ensure the fair treatment of the individual agents \cite{jiang2019learning, zimmer2021learning,DBLP:conf/atal/0002PTF23}. 
Closest to ours from these lines of work are the papers by  \cite{DBLP:conf/atal/0002PTF23} and \cite{kim2025navigating}.  \cite{DBLP:conf/atal/0002PTF23} uses ideas from the resource allocation literature to combine multiple objectives into a singular non-linear objective function and focuses on policy learning for such non-linear objective functions. \cite{kim2025navigating} focuses on finding a set of policies that approximate the Pareto front for various sequential planning problems.
However, in contrast to our paper, neither considers reward design, human natural language preference prompts and LLMs.

\textcolor{black}{\paragraph{Inverse Reinforcement Learning (IRL)}
Another area related to this work is inverse reinforcement learning (IRL) \cite{arora2021survey,jain2024irl}, which also focuses on obtaining the reward function. However, IRL aims to recover the underlying reward function that drives an expert’s demonstrated behavior. In other words, IRL seeks to understand why an agent acts in a certain way by observing its actions, rather than directly specifying its goals. In contrast, our approach does not depend on expert demonstrations, as such data is unavailable in our setting. Instead, we explicitly optimize the reward function to guide the agent toward the desired behavior.}

We refer to \Cref{app:RW} for additional related work.



\section{Preliminaries}

An instance of Restless Multi-Armed Bandits (RMAB) is defined by a set of $N$ arms, a time horizon $T$, and a budget $K$. We also refer to arms as agents. 
Each arm $i\in [N]$ is an independently evolving MDP with state space $\mathcal{S}_i$, actions $\mathcal{A}_i=\{0,1\}$, transition function $P_i:\mathcal{S}_i \times \mathcal{A}_i \times \mathcal{S}_i \to \mathbb{R}_{\geq 0}$, and reward function $R_i:\mathcal{S}_i \to \mathbb{R}$.
We refer to $1$ as the \emph{active} action corresponding to pulling the arm (i.e., allocating a resource) and $0$ as the \emph{passive} action corresponding to not pulling the arm.  
We focus on the popular case where each MDP consists of two states, i.e., $\mathcal{S}_i=\{0,1\}$ for all $i\in [N]$, yet our methodology applies to MDPs with arbitrary state spaces.   
We refer to $0$ as the \emph{bad} and $1$ as the \emph{good} state. 
For each step in which an agent is in the good state, they derive a \emph{utility} of $1$, while they derive a utility of $0$ in the bad state. 
Accordingly, agents' \emph{default reward function} $R^*$ is $R^*(s)=s$.
We assume that there is a set of categorical features. Each arm is associated with a value of each feature. 
A \emph{global reward function} is a reward function defined over features, which induces a reward function for each arm by plugging in its feature values~(see~\Cref{ex}).

In each step within the time horizon $T$, the planner observes the state of all arms and decides to pull a subset of at most $K$ arms. 
As solving the RMAB problem optimally is  computationally intractable  \cite{papadimitriou1994complexity}, we make use of the very popular state-dependent Whittle index \cite{whittle1988restless,nino2023markovian}, which given arms' reward functions tries to quantify for each state of each arm the reward gain achieved from applying the active action to the arm in this state.
In the Whittle index policy $\Pi$, in each step, we compute the Whittle index for each arm (based on its current state) and pull the arms with the $K$ highest Whittle indices. 
We will use it as the solution strategy in the following.

For a global reward function $R$, we write $\Pi(R)$ to denote the Whittle index policy for $R$, i.e., the Whittle index policy for the instance where each agent uses the function $R$ after plugging in their feature values as their reward.
We refer to $\Pi(R^*)$ as the \emph{default policy}.
To assess the quality of a global reward function $R$, we often consider the  \emph{utility feature distribution} for some feature $X$. 
This distribution shows for each value of the feature, the expected utility generated by arms with this feature value under the policy $\Pi(R)$ (see \Cref{fig:fair1} for an example).

\section{Problem Statement \& Challenges}\label{sec:Prob}
We assume that we are given a human-language preference prompt, concatenating one or multiple \emph{preference clauses}. 
Each preference clause specifies a single optimization goal. We explicitly consider three types of preference clauses (yet our methodology extends to arbitrary ones):
\begin{enumerate*}[label=(\roman*)]
\item Give priority to agents with certain  feature values, i.e., increase the utility generated by these agents, 
\item do not shift the utility distribution for some feature, and  
\item maximize the summed utility generated by all agents.
\end{enumerate*}
We mostly focus on the first type and refer to them as \emph{prioritization clauses and prompts}.
A preference prompt is a set $P=\{p_1,p_2,\dots\}$ of the involved preference clauses. 
We call a prompt $P$ \emph{singular} if $|P|=1$ and \emph{composite} otherwise; our focus is on the latter.
We can influence the utility agents generate by selecting a single global reward function (inducing reward functions $(R_i)_{i\in [n]}$ for all agents).  
\begin{example}\label{ex}
    Consider an RMAB instance with three binary features $A$, $B$, and $C$. A preference prompt $P$ could be ``Prioritize agents with $A=0$ and prioritize agents with $B=1$'', i.e., $P=\{$``prioritize agents with $A=0$''$,$   ``prioritize agents with $B=1$'' $\}$. 
    Two possible global reward functions for the prompt are $R'(s)=s\cdot (1-A)\cdot B$ and $R''(s)=s\cdot(1-A)+s\cdot B$.
    For function $R''$, the reward of an agent $i$ with $A=0$ and $B=1$ is $R_i(s)=2s$, while the reward  of an agent $j$ with $A=1$ and $B=1$ is $R_j(s)=s$. Selecting $R''$, agent $i$ is more likely to receive a resource than agent $j$, as the good state contributes more reward for $i$. 
\end{example}

We want to design a single global reward function that is ``well-aligned'' with all clauses of the given human-language preference prompt.
However, as clauses can contradict each other, perfect alignment with all clauses becomes impossible. For instance, if a prompt requests the prioritization of two fully disjoint subpopulations, each resource will only benefit one of the two.
When picking the reward function, we need to carefully balance the interests of the two groups of agents against each other.
Generally, in the presence of multiple agents and limited resources, each clause can be viewed as a separate independent objective that we want to optimize, rendering this a multi-objective problem. 

\begin{figure}[t!]
\centering
        \begin{subfigure}{0.23\textwidth}
            \includegraphics[width=1.1\textwidth]{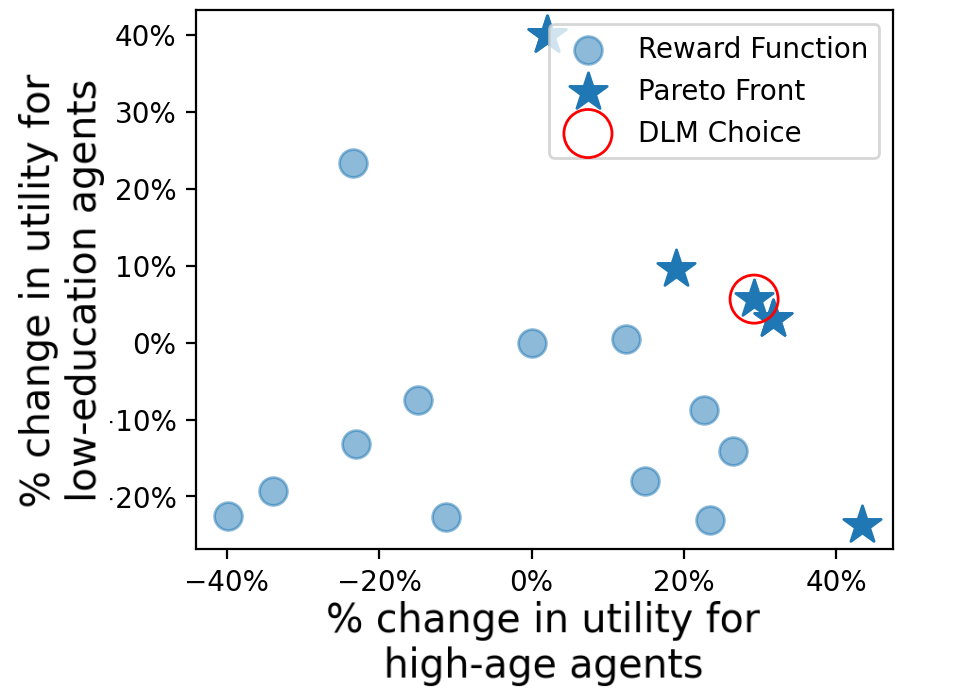}
            \caption{Prompt: ``Prioritize agents with old age and agents with low education''}
            \label{fig:sub1}
        \end{subfigure}
        \hfill
        \begin{subfigure}{0.23\textwidth}
            \includegraphics[width=1.1\textwidth]{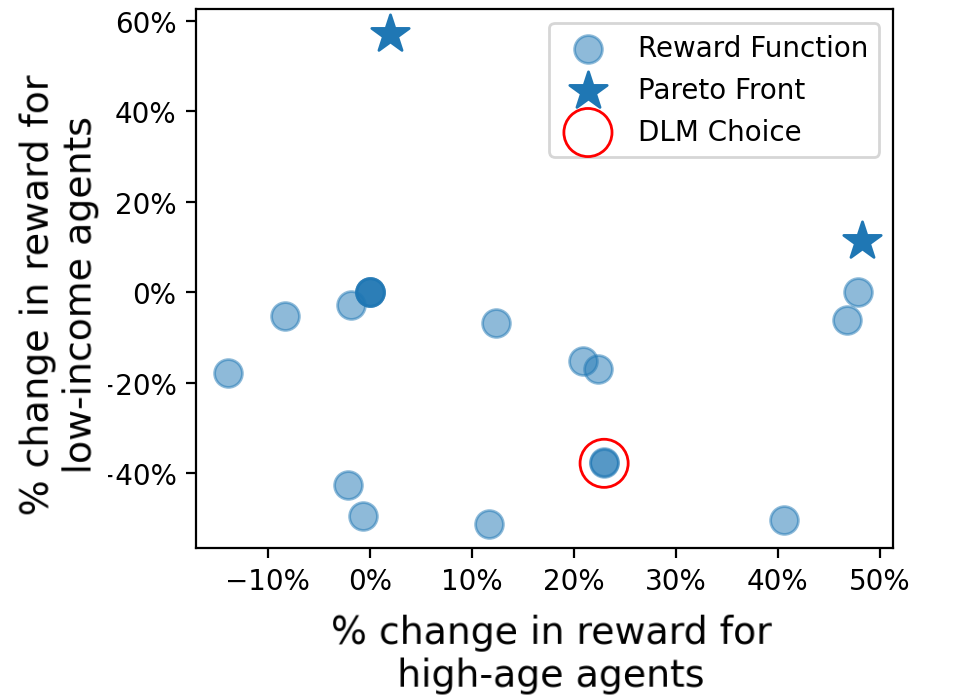}
            \caption{Prompt: ``Prioritize agents with high age and agents with low income''}
            \label{fig:sub4}
        \end{subfigure} \vspace{-0.5cm}
        \caption{Tradeoffs between prioritization clauses.} 
        \label{fig:pareto_tradeoff}
\end{figure}

To illustrate the tradeoff decisions we face between different clauses when selecting a reward function, in  \Cref{fig:pareto_tradeoff}, we show two instances from our experiments for a prompt consisting of two prioritization clauses. 
Every point represents a LLM-designed reward function. 
The $x$ and $y$ axes represent the quality of the reward function from the perspective of the two prioritized subgroups where higher percentage values indicate more benefits (see \Cref{TradeoffExp} for details). 
Reward functions marked with stars lie on the Pareto fronts (no other available function dominates them). 

In our experiments, we observe that the  DLM model from previous work picks functions from very different parts of the Pareto frontier, potentially clearly prioritizing one subgroup over another (see \Cref{fig:sub1} for an example).
In many other instances, it also picks suboptimal functions, i.e., functions that do not lie on the frontier, that may even harm one of the subgroups while strongly benefiting the other (see \Cref{fig:sub4}).
This highlights the risks (and shortcomings of DLM) in not accounting for the multi-objective nature of the problem, as it picks reward functions that are inefficient (i.e., dominated) and unfair (i.e., heavily favoring one clause over the other). 
\color{black}

Another shortcoming of DLM are unintended utility shifts.
\begin{figure}[t]
        \centering
        \begin{subfigure}{0.23\textwidth}\includegraphics[width=\textwidth]{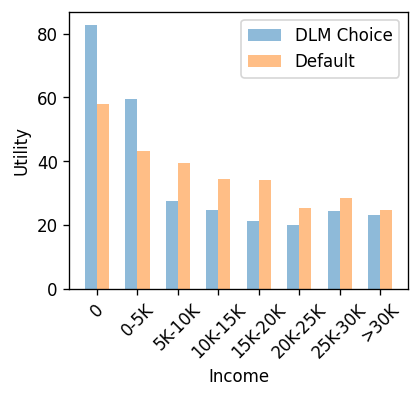}
\caption{Income feature}
\label{fig:fair1}
        \end{subfigure}
        \begin{subfigure}{0.23\textwidth}\includegraphics[width=\textwidth]{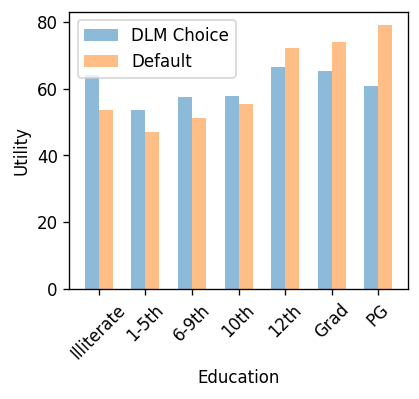}\caption{Education feature}\label{fig:fair3}
        \end{subfigure}  \vspace{-0.3cm}
        \caption{\color{black}Utility feature distributions for default reward function (orange) and reward function returned for prompt ``Prioritize  agents with low income'' (blue) by DLM baseline. $x$-axis depicts feature value and $y$-axis total utility generated by agents with this value.\color{black}}
        \label{fig:fair}
    \end{figure}
Moving from the default reward function to a reward function aligned with a given (prioritization) prompt causes shifts within the distribution of resources and utility.  
Due to correlations between features, 
this change might lead to unintended utility shifts for features not specified in the prompt.  \Cref{fig:fair} shows an example of this from our experiments. 
We present the utility feature distribution for the two features \emph{income} and \emph{education} for two reward functions: 
The reward function selected by DLM for the prompt ``Prioritize agents with low income''  (orange) and the default reward (blue). 
While the utility generated by low-income agents increases when moving from the default to the customized reward function, the utility generated by highly educated agents decreases, a side-effect the end-user might be unaware of and that might conflict with their allocation goals. In our proposed approach, we are able to account for this issue by incorporating the prevention of unintended utility shifts as a tradeoff dimension.
\color{black}

Thus, through the Social Choice Language Model, \emph{our goal is to create a model that handles multiple tradeoffs posed by composite  ``multi-objective'' prompts in a principled, transparent, and customizable fashion and outputs a single effective and fairly aligned global reward function. }

\section{Social Choice Language Model (SCLM)}\label{sec:SCLM}

\begin{figure*}[t]
    \centering
    \includegraphics[width=0.85\linewidth]{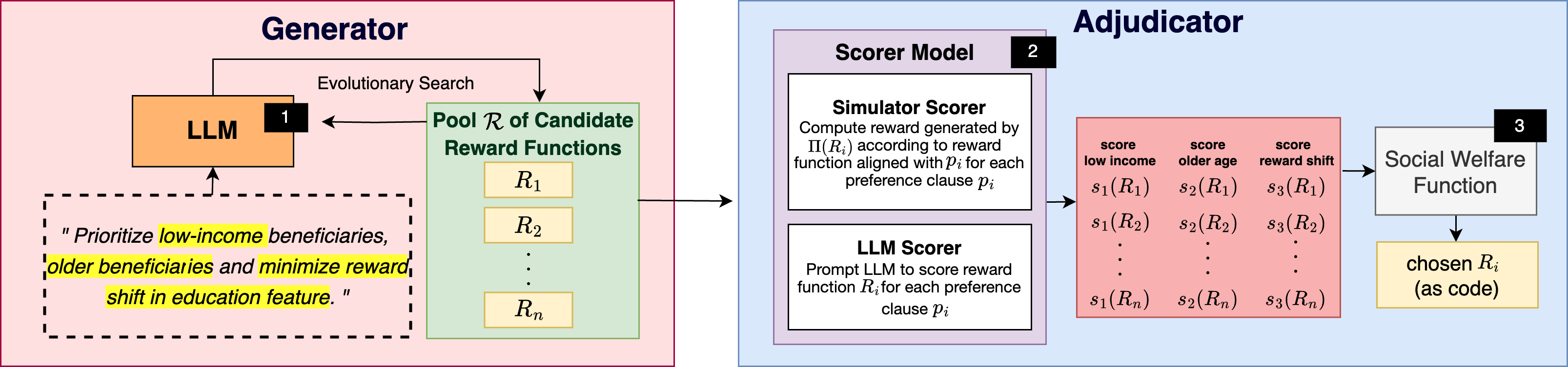}
    \vspace{5pt}
    \caption{Overview of SCLM. In step 1, preference prompt is passed to the generator, which performs an evolutionary search to create a pool $\mathcal{R}$ of candidate reward functions. In step 2, these functions are passed to the adjudicator where a scorer model (e.g., the simulator or LLM scorer) computes the alignment scores. In step 3, a user-defined social welfare function selects a reward function based on the alignment scores.}
    \label{fig:ov}
    \vspace{-0.5cm}
\end{figure*}



We propose a Social Choice Language Model to generate rewards from human language composite preference prompts (see \Cref{fig:ov} for a visualization). 
Separating the generation and selection of reward functions, the model consists of two sequential components. 
The LLM-powered \emph{generator} generates a set of candidate reward functions. 
Subsequently, taking a social-choice-inspired viewpoint, the \emph{adjudicator} selects a reward function from the pool to be returned to the user in two steps: 
First, a scorer model computes an alignment score for each reward function with each prioritization clause (i.e., we judge each reward function from the perspective of all relevant ``objectives''). 
Second, a user-defined social welfare function aggregates these scores into a ``winning'' candidate reward function.
By selecting the social welfare function, the user can control the area of the Pareto frontier from which reward functions get selected. 
While we remark that our model can also be used to tackle multi-objective issues arising when designing rewards in single-agent RL, the details of our components (e.g., the reflection in the generator and the computation of alignment scores) are specific to the multiagent nature of the RMAB problem.

\subsection{Generator} \label{gen}
Given a prompt, our generator creates a set of candidate reward functions (as Python code) via a variant of evolutionary search following \citet{DBLP:journals/corr/abs-2402-14807}:
We proceed in multiple steps. 
First, inputting 
the problem description, feature descriptions and the preference prompt, we ask an LLM to generate code for a reward function. We repeat this query $n_p$  times to obtain a set $\mathcal{R}$ of  $n_p$ candidate reward functions. 
Afterwards, for each function $R\in \mathcal{R}$ we compute the utility feature distributions of the policy $\Pi(R)$ induced by the reward function $R$ on the given RMAB instance (via repeatedly simulating the policy on the instance). 
Then, the prompt and the set of candidate reward functions together with the associated utility feature distributions are passed to an LLM, which is asked to select the reward function $R'$ from  $\mathcal{R}$ best aligned with the prompt \cite{DBLP:journals/corr/abs-2310-12931,DBLP:conf/nips/ShinnCGNY23}. 
Now, we repeat the whole process, this time including the selected policy $R'$ as a seed in the reward function generation prompts. 
Once we have executed the process $n_r$ times, we add all generated $n_p\cdot n_r$ candidate reward functions $R$ to the pool $\mathcal{R}$ (see  \Cref{app:DLM} for details).

\subsection{Adjudicator}\label{ad}
The adjudicator selects a reward function from a given pool of candidate reward functions returned by the generator. 
To handle the complex tradeoffs arising within composite prompts and the resulting multi-objective optimization problem, the adjudicator follows a social choice approach. 
Social choice is a discipline at the intersection of economics, philosophy, and mathematics and concerned with 
aggregating the potentially contradicting preferences of a set of voters into a fair compromise alternative from a given candidate set \cite{arrow2010handbook,moulin2004fair}. 
It thus provides a theoretically grounded and well-studied methodology for balancing competing interests. 
In our problem, we can interpret the reward functions as the candidates and the preference clauses in the prompt as the voters with their preferences over the candidates reflecting the reward function's alignment with the clause. 
This view gives rise to the following strategy: 
Given a prompt $P=\{p_1,p_2,\dots,p_\ell\}$, we evaluate each reward function $R\in \mathcal{R}$ from the perspective of each preference clause $p_i$ by computing an (alignment) score $s_i(R)$.  $s_i(R)$ measures the alignment of $\Pi(R)$ with preference clause $p_i$, i.e.,  how much the voter representing $p_i$ ``likes'' the candidate $R$.

In the following, in \Cref{soc:welfare}, we describe how the adjudicator selects the reward function given the scores; in \Cref{sec:align}, we describe how scores get computed; and in \Cref{sec:error}, we present a guarantee on the quality of the selected reward function if computed scores are noisy.


\subsubsection{Selection via Social Welfare Function}\label{soc:welfare}
Social welfare functions select an alternative based on input preferences of voters. The pros and cons of individual social welfare functions have been extensively researched and debated in social choice \cite{arrow2010handbook,rawls2017theory}. 
The \textit{generalized p-mean}, $f_p(\cdot): \mathbb{R}^l_{> 0} \to \mathbb{R}_{>0}$, is a rich class of social welfare functions which we consider in this work. 
It is defined for a given $p \in (-\infty, 1]$ and strictly positive vector $\mathbf{s}(R) = (s_1(R),\dots,s_l(R)) \in \mathbb{R}^l_{> 0}$ as follows:
\begin{equation}
f_p(\mathbf{s}(R)) =
\begin{cases}
\min_{i \in [l]} s_i(R) & \text{if} \ p = -\infty, \\
\displaystyle
\left(\frac{1}{l} \sum_{i \in [l]} s_i(R)^p\right)^{\!1/p}
& \text{if } p \not\in \{-\infty, 0\}, \\[1em]
\displaystyle
\left(\prod_{i \in [l]} s_i(R)\right)^{\!1/l}
& \text{if } p = 0.
\end{cases}
\end{equation}
In our experiments, we consider the three arguably most popular social~welfare~functions, which can be written as a generalized $p$-mean: 
\begin{description}
    \item[Utilitarian ($p=1$)] Return the reward function maximizing the sum of its scores, i.e., $\argmax_{R\in \mathcal{R}} \sum_{i\in [\ell]} s_i(R)$. 
    \item[Nash ($p=0$)] Return the reward function maximizing the product of its scores, i.e.,  $\argmax_{R\in \mathcal{R}} \prod_{i\in [\ell]} s_i(R)$. 
    \item[Egalitarian ($p=-\infty$)] Return the reward function maximizing its minimum score, i.e., $\argmax_{R\in \mathcal{R}} \min_{i\in [\ell]} s_i(R)$.
\end{description}
Selecting the social welfare function gives us control over the tradeoffs between objectives:
By picking the Egalitarian function, we ensure that one clause will not get prioritized over another. 
In contrast, the Utilitarian function prioritizes the summed alignment, allowing for mismatches between clauses; the Nash function strikes a balance between the two functions.\footnote{Note that social welfare functions also allow for assigning a different importance to clauses: 
The user could submit an importance score $w_i$ for each clause $p_i$, which can be easily incorporated in the social welfare function, e.g., the Utilitarian welfare function becomes $\argmax_{R\in \mathcal{R}} \sum_{i\in [\ell]} w_i\cdot s_i(R)$.} 
The adjudicator makes the selection process more transparent, as the different objectives, the selection criterion, and the performance of the candidate reward functions regarding the objectives become explicit.

\subsubsection{Computing Alignment Scores}\label{sec:align}
It remains to describe how the alignment scores $s_i(R)$ are computed. 
We present two general methods to compute alignment scores, which we will use for prioritization clauses. 
Subsequently, we discuss two more customized methods for the prevention of unintended utility shifts or drops in total generated utility. 

\paragraph{Simulator Scorer Model (SCLM-SIM)}
For each preference clause $p_i\in P$, we compute a reward function $R_i$ aligned with $p_i$ by casting it as a singular prompt to the DLM pipeline (see \Cref{app:DLM}). 
For each $R\in \mathcal{R}$, we compute as $s_i(R)$ the expected reward according to reward function $R_i$ produced by policy $\Pi(R)$ (again, we approximate this quantity by running multiple simulations). Accordingly, $s_i(R)$ quantifies the quality of the policy induced by the candidate reward function $R$ from the perspective of $p_i$ (as captured by $R_i$). 
As the scale of the reward functions can vary significantly among preference clauses, we normalize the scores by the performance of the default policy, i.e., we compute $\frac{s_i(R)-s_i(R^*)}{s_i(R^*)}$.

\paragraph{LLM Scorer Model (SCLM-LLM)}
The Simulator Scorer Model assumes access to reward functions capturing individual preference clauses well. 
If no well-aligned reward functions can be obtained, the performance of SCLM-SIM can deteriorate because it can become  noisy.
Another disadvantage of SCLM-SIM is that the scores in SCLM-SIM are all computed via simulation, which can become computationally  costly. 
Motivated by this, we propose a quicker and more flexible LLM-based approach, where we prompt an LLM to rate the alignment of a candidate reward function with a preference clause. In particular, for each $R\in \mathcal{R}$ and $p_i\in P$, we use a prompt that includes $R$, $p_i$, and the utility feature distributions produced by policy $\Pi(R)$. 
We ask the LLM to rank how well $R$ aligns with the preference clause $p_i$ on a scale from  1 to 5 (see \Cref{app:prompt} for prompt texts). 

\paragraph{Preventing Unintended Utility Shifts and Utility Drop}\label{shifts}
Aligning reward functions to a prioritization prompt may cause (unintended) utility shifts in other features (e.g., due to feature correlations, shifting utility to low-income beneficiaries might shift it away from more educated ones). See \Cref{fig:fair} for a concrete example. 
SCLM offers users the option to explicitly prevent these shifts by adding additional clauses (``objectives'') to the prompt: 
Given a prompt $P$ (e.g., the prompt from \Cref{ex}), for each feature not referenced in the prompt, the user can add a new preference clause requesting a minimum shift in the utility distribution of this feature (e.g., for \Cref{ex} they could add ``do not change the utility distribution for feature $C$''). 
To compute the alignment score $s_i(R)$ between a reward function $R$ and a clause $p_i$=``minimize utility shift for feature $X$'', we compare feature $X$'s utility distribution under the default policy with its utility distribution under the policy $\Pi(R)$. 
Specifically, we quantify the difference using the Earth mover's distance (EMD) between the two distributions.
Afterward, we apply $0$-$1$ normalization to all scores $s_i(R)_{R\in \mathcal{R}}$ for prompt $p_i$, which are input to the social welfare function (along with the alignment scores for the other clauses).

Another potential risk of aligning a reward function with a prioritization prompt is that it can sharply decrease the summed utility generated by all agents: The user might request the prioritization of a subpopulation that does not benefit much from receiving a resource, leading to severe drops in the summed utility generated by all agents. 
Users can address this issue in our model by adding a clause $p_i$=``maximize the total generated utility'' to the prompt.
As the alignment score $s_i(R)$ of $p_i$ with some reward function $R$ we compute the summed utility, i.e., the total number of steps in which arms are in an active state, generated by all agents under the policy $\Pi(R)$ (computed via multiple simulations of the policy on the given instance). We again apply $0$-$1$ normalization to all scores $s_i(R)_{R\in \mathcal{R}}$ for prompt $p_i$.

\subsubsection{Error Bounds for Adjudicator's Selection}\label{sec:error}
\theoremstyle{plain}
Even though we observe in our experiments that the scorer models produce mostly accurate scores, the output scores are oftentimes still a bit noisy. 
To measure how errors propagate through the Social Choice Language Model and how they affect the final reward function selection, we consider the following setup.

Suppose instead of observing the true score vector $\mathbf{s}(R_j) = (s_1(R_j), s_2(R_j), ..., s_l(R_j))$, the Scorer Model (SCLM-SIM or SCLM-LLM) returns a noisy score estimate $\tilde{\mathbf{s}}(R_j)$ with multiplicative noise $\alpha \in (0,1]$  satisfying

\begin{equation}\label{eq:approx}
\alpha \cdot \mathbf{s}(R_j) \leq \tilde{\mathbf{s}}(R_j) \leq \frac{1}{\alpha} \cdot \mathbf{s}(R_j), \quad \forall R_j\in \mathcal{R}.
\end{equation}

\noindent Let $\tilde{R}^* = \arg\max_{R_j \in \mathcal{R}} f_p(\tilde{\mathbf{s}}(R_j))$ be the best reward function under the generalized p-mean function for the observed, noisy scores ($\tilde{R}^*$ will be returned by SCLM); and $R^* = \arg\max_{R_j \in \mathcal{R}} f_p(\mathbf{s}(R_j))$ be the best reward function under the generalized p-mean function for the true, latent score. We define the (relative) regret we encounter by choosing $\tilde{R}^*$ instead of $R^*$ as: 
\begin{equation}
\textit{Relative Regret} = \frac{f_p(\mathbf{s}(R^*)) - f_p(\mathbf{s}(\tilde{R}^*))}{f_p(\mathbf{s}(R^*))}
\end{equation}
The relative regret measures the relative drop in p-mean welfare of the reward function chosen by the adjudicator as compared to the optimal reward function. We show that the relative regret degrades gracefully in the multiplicative error parameter, highlighting that even in the presence of noise, SCLM selects good reward functions with guarantees.

\begin{proposition} The relative regret is bounded by $1-\alpha^2$.
\end{proposition}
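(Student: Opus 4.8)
The plan is to exploit two structural properties of the generalized $p$-mean $f_p$ that hold uniformly across all $p\in(-\infty,1]$: \emph{positive homogeneity of degree one} and \emph{monotonicity} in each coordinate. For homogeneity, note that for any scalar $c>0$ and $p\notin\{-\infty,0\}$ we have $f_p(c\,\mathbf{s})=\bigl(\frac{1}{l}\sum_i (c s_i)^p\bigr)^{1/p}=c\,f_p(\mathbf{s})$, and the same scaling holds trivially in the geometric-mean case $p=0$ (pulling $c^{1/l\cdot l}=c$ out of the product) and the min case $p=-\infty$ (since $\min_i c s_i=c\min_i s_i$). For monotonicity, observe that each branch of $f_p$ is non-decreasing in every argument on the strictly positive orthant $\mathbb{R}^l_{>0}$; this is where the assumption $\mathbf{s}(R)\in\mathbb{R}^l_{>0}$ is essential, and it is the one place I would state (or cite as standard) the needed fact about power means.

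With these two properties in hand, I would first lift the coordinatewise multiplicative bound in \eqref{eq:approx} to a scalar bound on the welfare values. Applying monotonicity to the inequalities $\alpha\,\mathbf{s}(R_j)\le\tilde{\mathbf{s}}(R_j)\le\frac{1}{\alpha}\mathbf{s}(R_j)$ and then factoring the scalar out by homogeneity yields, for every $R_j\in\mathcal{R}$,
\begin{equation}
\alpha\, f_p(\mathbf{s}(R_j)) \;\le\; f_p(\tilde{\mathbf{s}}(R_j)) \;\le\; \tfrac{1}{\alpha}\, f_p(\mathbf{s}(R_j)).
\end{equation}
This is the crux: the noise, which a priori could distort the \emph{ranking} of reward functions arbitrarily, can only perturb each function's welfare score by a bounded multiplicative factor.

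Next I would chain inequalities using the optimality of $\tilde{R}^*$ under the observed scores. Since $\tilde{R}^*=\arg\max_{R_j}f_p(\tilde{\mathbf{s}}(R_j))$, we have $f_p(\tilde{\mathbf{s}}(\tilde{R}^*))\ge f_p(\tilde{\mathbf{s}}(R^*))$. Combining this with the two-sided bound above gives
\begin{equation}
f_p(\mathbf{s}(\tilde{R}^*)) \;\ge\; \alpha\, f_p(\tilde{\mathbf{s}}(\tilde{R}^*)) \;\ge\; \alpha\, f_p(\tilde{\mathbf{s}}(R^*)) \;\ge\; \alpha^2\, f_p(\mathbf{s}(R^*)),
\end{equation}
where the first step uses the lower bound applied to $\tilde R^*$, the second uses optimality, and the third uses the lower bound applied to $R^*$. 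Substituting into the definition of relative regret,
\begin{equation}
\textit{Relative Regret} \;=\; 1-\frac{f_p(\mathbf{s}(\tilde{R}^*))}{f_p(\mathbf{s}(R^*))} \;\le\; 1-\alpha^2,
\end{equation}
which is the claimed bound.

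I expect the proof to be short and the only genuine care point to be the verification that $f_p$ is monotone and positively homogeneous across all three regimes ($p=-\infty$, $p=0$, and $p\notin\{-\infty,0\}$), together with a clean justification that one may apply these two properties to both sides of \eqref{eq:approx} simultaneously. The strict positivity of the scores is load-bearing — it guarantees $f_p(\mathbf{s}(R^*))>0$ so the ratio is well defined, and it keeps every branch of $f_p$ monotone — so I would flag that dependence explicitly rather than treat it as cosmetic.
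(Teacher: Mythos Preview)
Your proposal is correct and follows essentially the same approach as the paper: establish monotonicity and positive homogeneity of $f_p$ across the three regimes, lift the coordinatewise bound to $\alpha\,f_p(\mathbf{s}(R_j))\le f_p(\tilde{\mathbf{s}}(R_j))\le\frac{1}{\alpha}f_p(\mathbf{s}(R_j))$, and chain through the optimality of $\tilde R^*$ to obtain $f_p(\mathbf{s}(\tilde R^*))\ge\alpha^2 f_p(\mathbf{s}(R^*))$. One small labeling slip: the first inequality in your chain, $f_p(\mathbf{s}(\tilde R^*))\ge\alpha\,f_p(\tilde{\mathbf{s}}(\tilde R^*))$, comes from rearranging the \emph{upper} side of the two-sided bound at $\tilde R^*$, not the lower side, but the inequality itself is correct.
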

\noindent\textit{Proof sketch. } We  observe the monotonicity and positive homogeneity of the generalized p-mean function applied to the input values. These properties allow the application of function $f$ to both sides of Inequality \ref{eq:approx}. Subsequently, using the definition of $R^*$ and $\tilde{R}^*$, we derive the regret bound.
For a complete proof, see Appendix D.

\section{Experiments}

We describe our testset (\Cref{sec:DD}), the compared methods (\Cref{models}), and our experimental results both for dealing with composite prioritization prompts (\Cref{TradeoffExp}) and additionally minimizing unintended side effects (\Cref{sec:fairBias}). 
Following the work of \citet{DBLP:journals/corr/abs-2402-14807}, which constitutes our most important baseline, we use Gemini Pro \cite{DBLP:journals/corr/abs-2312-11805} as the LLM in our experiments.

\subsection{Dataset Description} \label{sec:DD}
\citet{armman_ngo} is a non-profit in India that operates large-scale Maternal and Child Care Mobile Health programs for underserved communities. 
One of their programs disseminates critical health information via weekly automated voice messages. 
The goal of the NGO is to maximize beneficiaries' engagement, i.e., the number of messages they listen to.
A limited number of beneficiaries are called by health workers every week to boost engagement. 
The problem of planning which beneficiaries to call has been modeled and solved as an RMAB, where the good/bad state corresponds to a high/low weekly engagement of the beneficiary. 
We use anonymized data from a quality improvement study conducted in January 2022 \cite{DBLP:journals/aim/VermaSMVGMHTJTT23}. 
For each beneficiary, we have access to their income, education, and age level, which we use as our three features. 
Beneficiaries' historic listenership values are used to estimate their transition probabilities under the passive action \cite{mate2022field}. 
One problem in estimating transition probabilities under the active action is that due to the limited number of service calls made, such transitions are rare. Thus, active transition probability estimates are noisy. To alleviate this issue,
we use the features and passive transition probabilities from ARMMAN together with synthetically generated active transition probabilities. 
Finally, we create three datasets, each consisting of
five sampled RMAB instances with
$N=2100$ arms, a budget of $B=210$ and a time horizon of $T=12$. The three datasets differ in 
how much each feature impacts the effect of applying an active action.  In addition to the real-world domain, we also create three completely synthetic domain datasets (see Appendix B.3 and B.4 for more details on dataset generation).

\paragraph{Problem Instances}
Instances of our problem consist of two parts: A preference prompt and an RMAB instance. 
We initially focus on prioritization prompts. 
Specifically, for each feature $X$, we consider two different prioritization clauses ``Prioritize agents with low/high value of feature $X$''. 
This gives rise to $6$ singular prompts consisting of one prioritization clause, two for each feature.
For composite prompts, we take all combinations of two features and the two prioritization clauses for each feature (e.g. ``Prioritize agents with high value of feature $A$ and also prioritize agents with low value of feature $B$''). 
This results in $3\cdot 4=12$ composite prompts. 
For each domain, we run each prompt on the $15$ RMAB instances from the three datasets.




\subsection{Models \& Baselines}\label{models}
We analyze six different variants of SCLM differing in the used social welfare function (Utilitarian, Egalitarian, Nash) and scorer model (Simulator or LLM), e.g., we denote as \emph{SCLM-SIM-Egalitarian} SCLM with the Simulator Scorer Model and the Egalitarian social welfare function.
In our generator, we generate $4$ candidate reward functions in each step and run $5$ iterations to generate a total of $20$ candidate reward functions.
In addition, we consider several LLM-focused baselines (see \Cref{app:prompt} for~detailed~descriptions): 
\begin{description}
    \item[LLM-Zeroshot] This baseline only queries the LLM once. It asks to return a reward function aligned with the given preference prompt and provides the problem and feature  description as additional context in the prompt.
    \item[DLM] This baseline implements the Decision-Language Model by \citet{DBLP:journals/corr/abs-2402-14807} (see \Cref{app:DLM}). 
    \item[DLM-PromptEngg] This is a modified version of DLM where within the reflection prompt, we include examples for singular queries of how the LLM should reason over the different reward function choices (see \Cref{app:prompt}). 
\end{description}






\subsection{Results: Composite Prioritization Prompts}\label{TradeoffExp}
We analyze the performance on the $12$ composite prompts described above which request the prioritization of two subpopulations (see \Cref{app:comp_Promp} for additional results).

\paragraph{Evaluation Metrics}
As our goal is to fulfill the preferences specified by the user (in contrast to the classic goal of maximizing total utility), we need to quantify the alignment of the returned reward function with the given prompt $P$ to evaluate our models. 
Due to the composite, multi-objective nature of our prompts, we start by measuring the alignment of the returned reward function $R$ with each prioritization clause $p_i\in P$ in a separate evaluation score $e_i(R)$.
For this, we need to quantify how well a given reward function prioritizes the subpopulation specified in the prompt. 
However, as our prompts are written in human language, these subpopulations are not precisely defined (as the prompts only speak of agents with ``high''/``low''  value of some feature $X$).
Notably, one could think that the scores $s_i(R)$ computed in our adjudicator could be used as our evaluation scores $e_i(R)$, as they measure how well a reward $R$ aligns with a prioritization clause $p_i$. 
However, this would create an unfair competitive advantage for  the SCLM compared to our baselines who do not have access to these~scores.

Instead, we assume that the terms ``low'' and ``high'' in the input prompts refer to the most extreme feature values.
Let $p_i$ be some prompt prioritizing agents with a high/low value of some feature $X$. 
As the evaluation score $e_i(R)$, we compute the summed utility generated by the agents with highest/lowest value of $X$ under the policy $\Pi(R)$ normalized by the utility generated by these agents under the default policy $\Pi(R^*)$.\footnote{In  \Cref{app:comp_Promp}, we also check how the results change if we instead interpret ``low''/``high'' to refer to the lowest/highest two or three values. We observe very similar trends.}  
Reflecting the multi-objective nature of our problem, we consider two metrics for measuring the alignment of a reward $R$ with a full composite prompt: sum and minimum of $\%$ change of the utility generated by the two prioritized groups under policy $\Pi(R)$ compared to the default policy, i.e., the sum (resp.\ minimum) of the evaluation scores for $R$.

\begin{figure*} 
  \centering
  \begin{subfigure}{\textwidth}
  \centering
    \includegraphics[trim=0 160 0 0, clip,width=0.85\textwidth]{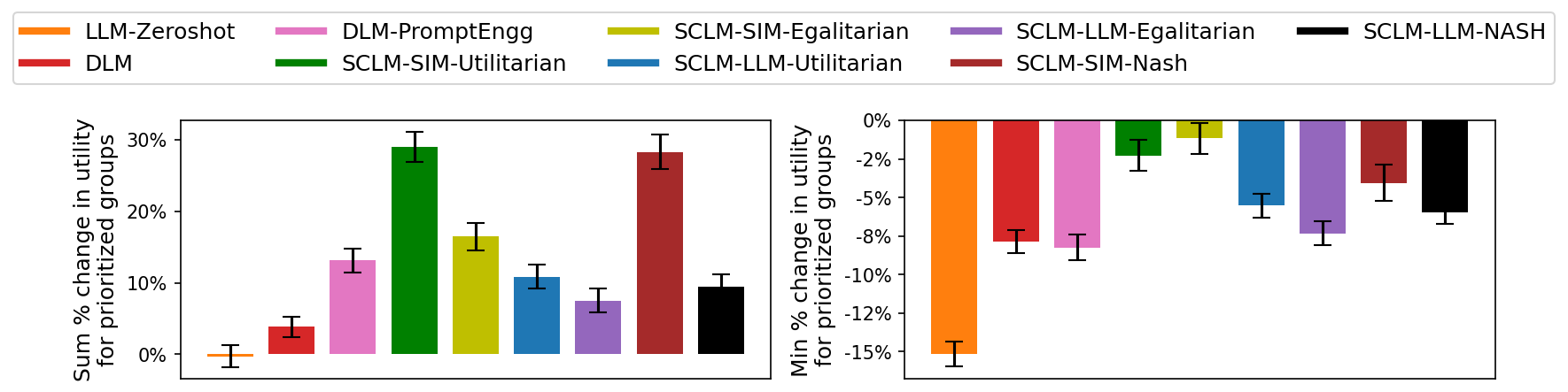}

  \end{subfigure}
  \hfill
  \begin{subfigure}{0.9\textwidth}
  \centering
    \includegraphics[width=0.7\textwidth,height=65pt]{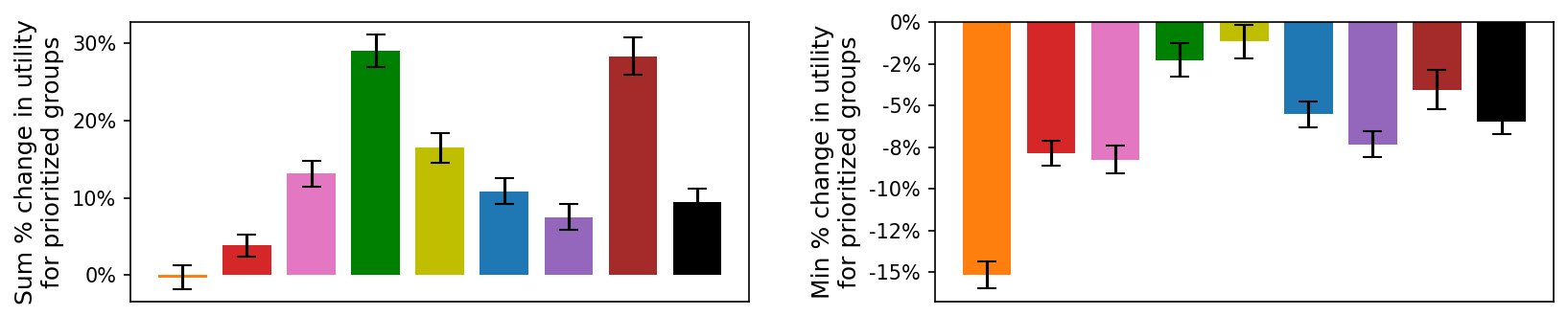}
    \caption{Synthetic domain: sum \% change (left) and minimum of \% change (right) in utility for the two groups prioritized.}
    \label{fig:synth_barchart}
  \end{subfigure}
  \hfill
  \begin{subfigure}{0.9\textwidth}
  \centering
    \includegraphics[width=0.7\textwidth,height=65pt]{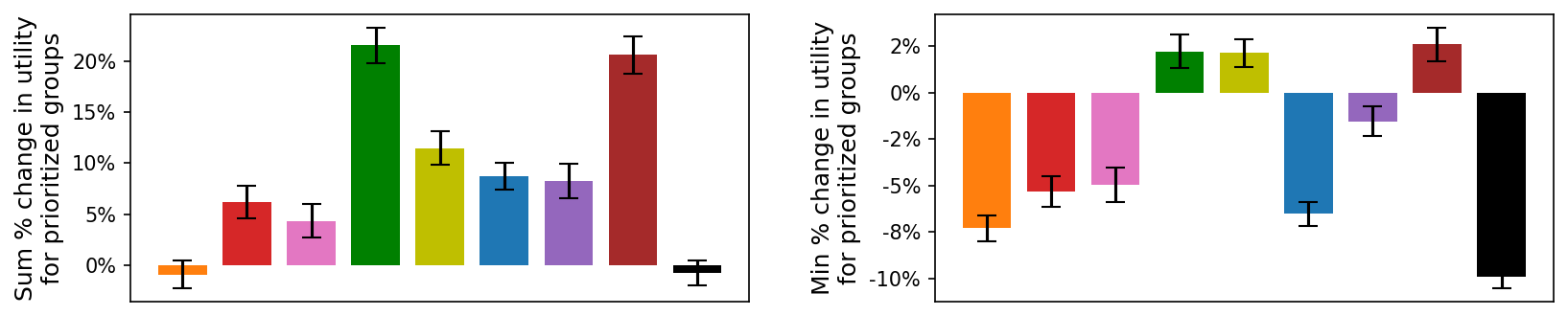}
    \caption{Real-world domain: sum \% change (left) and minimum of \% change in utility for the two groups prioritized.}
    \label{fig:rw_barchart}
  \end{subfigure}
  \hfill
  \caption{ \color{black}Results comparing the quality of reward design methods for composite prioritization prompts.  
  Results are averaged across $180=12\cdot 15$ values: $12$ composite prompts on $15$ RMAB instances (from $3$ datasets). Error bars represent std-error. \color{black}
  }
  \label{fig:tradeoff} \vspace{-0.5cm}
\end{figure*}

\paragraph{Results}
In \Cref{fig:tradeoff}, we show the averaged results from the synthetic and real-world domain.
We depict the average summed and minimum  alignment with the two clauses of the composite prompt, i.e., the minimum/summed change in the utility generated by the prioritized group of agents. 

We start by focusing on SCLM with Simulator Scorer \textit{SCLM-SIM} (green-shaded bars), our strongest method. 
\color{black} On both domains, \textit{SCLM-SIM} significantly outperforms all baselines for both minimum and summed $\%$ change independent of whether the Utilitarian, Egalitarian or Nash social welfare function is chosen. \color{black} 
\textit{SCLM-SIM-Utilitarian} outperforming the baselines for the minimum change and \textit{SCLM-SIM-Egalitarian} outperforming them for the summed change highlights the advantages of the SCLM, as these objectives are not explicitly optimized by the respective models, e.g., 
\textit{SCLM-SIM-Utilitarian} aims at maximizing the summed change and not the minimum one, but still performs well regarding the minimum change.
This indicates that SCLM independent of the chosen welfare function does a better job at picking effective and aligned reward functions (on the Pareto front).
Comparing \textit{SCLM-SIM-Utilitarian} and \textit{SCLM-SIM-Egalitarian}, the two methods exhibit a big difference under the summed change criterion, while the difference regarding the minimum change is much smaller. 
Examining individual instances we find in \Cref{app:comp_Promp} that both functions lead to very different selections on the instance level; unsurprisingly, the Egalitarian method creates rewards that benefit both groups in a more balanced fashion. \color{black} For the Nash welfare function, we found that the performance was similar, yet slightly inferior to the Utilitarian welfare function in all relevant evaluation dimensions. \color{black}


If we replace Simulation Scorer with LLM Scorer, performance of SCLM decreases, but is better than all of our three baselines. 
The difference between LLM and Simulation Scorer highlights the advantage of the additional information acquired through more complex and computationally expensive simulation method. 
Regarding the performance of baselines, 
our DLM baseline with prompt engineering \textit{DLM-PromptEngg} improves slightly upon the results of DLM in the synthetic domain, while in real-world domain, their performance is similar. 
This suggests that prompt engineering itself is not sufficient to adequately deal with the multi-objective nature of composite prompts; an external component (like our adjudicator) is needed. Finally, LLM-zeroshot consistently performs the worst, which highlights the non-triviality of  reward design problem and the need for a guided extensive search within reward function search space.



\color{black}

\begin{table}[ht]
\caption{\textcolor{black}{Results comparing different reward function selection strategies, aggregated across three real-world datasets. Higher summed \% change in desired feature(s) implies better alignment with prioritization clauses, whereas less unintended shift and less \% drop in utility are better.}}
\vspace{5pt}
\centering 
    \centering
    \renewcommand{\arraystretch}{1.2}
    \resizebox{1\columnwidth}{!}{
    \begin{tabular}{|c|c|c|}
\hline
\textbf{Method}         & \textbf{\begin{tabular}[c]{@{}c@{}}Summed \% Change in\\ Desired Feature(s)\end{tabular}} & \textbf{\begin{tabular}[c]{@{}c@{}}Unintended\\ Shift\end{tabular}} \\ \hline
DLM-PrioritizationOnly  & 6.809$\pm$0.86                                                                            & 0.302$\pm$0.02                                                      \\ \hline
DLM-ExtendedPrompt-Fair & -0.254$\pm$0.73                                                                           & 0.276$\pm$0.02                                                      \\ \hline
SCLM-PrioritizationOnly                  & 13.131$\pm$0.86                                                                           & 0.316$\pm$0.02                                                      \\ \hline
SCLM-ExtendedPrompt-Fair  & 15.364$\pm$0.94                                                                           & 0.099$\pm$0.01                                                      \\ \hline
\end{tabular}}
\label{table1}
\end{table}
\color{black}
\subsection{Addressing Fairness and Biases}\label{sec:fairBias}

As discussed in \Cref{ad}, we can also use our pipeline to prevent unintended side-effects of aligning reward functions with prioritization clauses, i.e., 
\begin{enumerate*}[label=(\roman*)]
\item shifts in the utility feature distribution of features not included in the prompt,  
\item drops in the total generated utility, and 
\item arbitrary weighted combinations of the above two goals and additional preference clauses.
\end{enumerate*}
We focus on (i) here and relegate the results for (ii) and (iii), which paint a very similar picture, to Appendix D.3.

We analyze all $6$ singular and $12$ composite prioritization prompts (see \Cref{sec:DD}), where we add additional clauses to prevent shifts in the utility distribution of all features not referenced in the prompt. We use the simulator scorer with a Utilitarian social welfare function and call the resulting model SCLM-ExtendedPrompt-Fair.
As baselines, we consider DLM only prompted with the prioritization clause(s) (called DLM-PrioritizationOnly) and DLM prompted with the prioritization clause(s) and clause(s) together for a request for minimizing of utility shifts for the other features (called DLM-ExtendedPrompt-Fair). We also consider SCLM-SIM-Utilaterian only prompted with the prioritization clause(s) (called SCLM-PrioritizationOnly).
See Appendix E for more details on the prompts.

To compute the alignment with prioritization clauses, similar to \Cref{TradeoffExp}, we compute average change in utility generated by prioritized subpopulations.
To quantify unintended utility shifts, we compute the average Earth mover's distance between utility feature distribution under the candidate and default reward function for each feature not included in one of the prioritization clauses.

\Cref{table1} shows the results. 
Comparing \emph{DLM-PrioritizationOnly} and \emph{DLM-ExtendedPrompt-Fair}, we find that adding additional objective to the prompt does not result in a better performance for real-world domains. 
In contrast, \emph{SCLM-ExtendedPrompt-Fair} which incorporates unintended shifts in the selection chooses reward functions resulting in significantly higher utility increases for prioritized subpopulations and significantly fewer unintended utility shifts. 
The fact that SCLM performs advantageously for both (conflicting) objectives highlights the quality of the pipeline and its capabilities to effectively address multiple objectives (of different types). We see similar results in synthetic domain (see Table 5 in the appendix). 
\section{Discussion}
We present a customizable Social Choice Language Model to handle the multi-objective nature of preference prompts in reward design for RMABs. 
We showcase how methods from social choice can be used to improve the quality and transparency of decision-making of LLM-based frameworks, as we present an adjudicator component that makes the final decision from options generated by the LLM.   
SCLM significantly improves the quality of the chosen reward functions.
We demonstrate that SCLM can not only handle composite prioritization prompts but arbitrary prompts containing multiple objectives, e.g.,  balancing the prioritization of subpopulations with the total utility generated by all agents. 
For future work, SCLM can be applied to other problems from multiagent planning and reinforcement learning. Further, SCLM can easily be extended to handle multiple preference prompts specified by different users.




\section{Acknowledgments}
This material is based upon work supported by the AI Research Institutes Program funded by the National Science Foundation under the AI Institute for Societal Decision Making (NSF AI-SDM), Award No. 2229881.
\bibliography{sample}

\newpage

\clearpage
\appendix

\section*{Ethics Statement}
\subsection*{ARMMAN domain}
\paragraph{Data Usage, Collection and Consent} The data used for the realworld domain contains fully-anonymyzed datasets. Our experiments constitute secondary analysis of the data and are approved by ARMMAN's ethics board. The paper does not involve any realworld deployment of the proposed algorithms for ARMMAN. For data collection consent is taken from each beneficiary in ARMMAN's automated voice call program.  Data exchange and use was regulated through clearly defined exchange protocols
including anonymization, read-access only to researchers, restricted use of the data for research purposes only, and approval by
ARMMAN’s ethics review committee.
\paragraph{Accessibility of Health Information} In our simulation experiments, we change the reward functions for every agent in mobile health program. But this only improves the quality of service calls and no health information is withheld from any agent. Participants in the program can still request service calls through free missed call service.

\newpage
\clearpage
\section*{Appendix}

\section{Additional Related Work}\label{app:RW}

\section{Additional Material for \Cref{sec:Prob}}\label{app:Prob}

\begin{figure}
        \centering
        \begin{subfigure}{0.23\textwidth}\includegraphics[width=\textwidth]{images/fair1.png}
\caption{Income feature.}
\label{fig:fair1}
        \end{subfigure}
        \begin{subfigure}{0.23\textwidth}\includegraphics[width=\textwidth]{images/fair2.png}\caption{Education feature.}\label{fig:fair3}
        \end{subfigure}
        \caption{Utility feature distributions for the default reward function (orange) and reward function returned for prompt ``Prioritize  agents with low income'' (blue). The $x$-axis depicts the values of the feature and the $y$-axis the total utility generated by all agents with this feature value.}
        \label{fig:fair}
    \end{figure}
    
\paragraph{Unintended Utility Shifts}
Moving from the default reward function to a reward function aligned with a given (prioritization) prompt causes shifts within the distribution of resources and utility.  
Due to correlations between features, 
this might change might lead to unintended utility shifts for features not specified in the prompt. In \Cref{fig:fair}, we show an instance of our experiments. 
We present the utility feature distribution for the two features income and education for two reward functions: 
The reward function selected by DLM for the prompt `` Prioritize agents with low income ''  (orange) and the default reward (blue). 
While the utility generated by low-income agents increases, the utility generated by highly educated agents decreases, a side-effect a user is likely unaware of and that might conflict with their allocation goals.

\section{Implementation Details}\label{app:implementation}

\subsection{Whittle Index Policy}\label{app:WI}
To formulate the Whittle Index Policy, first we define the value function for an arm $i\in[N]$ under the subsidy $\lambda$ as
\begin{align}
\label{eq:value_function_each_arm}
V_i^{\lambda}(s) = \max_{a\in\{0,1\}} Q_i(s,a_i,\lambda).
\end{align}
Here, $Q_i(s,a_i,\lambda)$ measures the expected discounted cumulative future reward where a subsidy $\lambda$ is added to the reward when the passive action is taken. The Whittle index associated to the state $s_i$ is then defined as:
\begin{align}
W_i(s_i):=\inf_{m}\left\{Q_i(s_i,a_i=0,m) = Q_i(s_i,a_i=1,m)\right\}.
\end{align}
The whittle index is thus the value of subsidy such that the passive ($a=0$) and active ($a=1$) actions have the same value. Intuitively, this captures the value of taking active action on an arm.

To implement the Whittle Index computation, we use the method in \cite{qian2016restless} based on binary search. Additionally, for all the experiments, we cache computed whittle index for a given set of transition probabilities and reward functions to reduce computation time.

\subsection{DLM Pipeline}\label{app:DLM}
In our work, we use a modified version of the DLM pipeline~\cite{DBLP:journals/corr/abs-2402-14807}, which employs the Whittle Index policy as the planner for RMAB. Our approach differs from \citet{DBLP:journals/corr/abs-2402-14807}  in that we use the Whittle Index policy specifically for simulating RMAB, whereas \citet{DBLP:journals/corr/abs-2402-14807}  use the PreFeRMAB policy (Zhao and Behari et al. 2023). This modification allows for faster and more stable simulations and effectively decouples the learning problem from the planning problem.

Specifically, the DLM consists of three components. First, a user provides a natural language preference $P$. We then create a prompt for LLM which includes the context of the RMAB problem, the preference $P$, a description of features available, and the index of those features in the feature array. Finally, the LLM is asked to generate the Python code of the reward function in text format. We describe the prompt used in Figures \ref{app:prompt_gen_synth} and \ref{app:prompt_gen_rw}.

The LLM is queried $n_p$ times to obtain $n_p$ reward functions. For each reward function, we also compute the reward distribution over all the features. Next, given all the generated reward functions and their corresponding reward distributions, we query the LLM to select the best reward function. We describe the prompt used in Figures \ref{app:prompt_choose_synth} and \ref{app:prompt_choose_rw}. This is called the reflection step. The best reward function is then used inside the prompt for next step of generating $n_p$ reward functions. This process is repeated $n_r$ times to obtain $n_p \cdot n_r$ reward functions. In all our reward function generation experiments, we query the LLM $n_p=4$ times and run the reflection loop $n_r=5$ times resulting in 20 candidate reward functions.

As an LLM, we use the Gemini Pro model by Google. We query the LLM using python based API from the \textit{generative-ai-python} library. 

\subsection{Synthetic Dataset Generation}\label{app:dataset_synth}
An RMAB problem is defined by the transition probabilities of Markov Decision Process governing every arm and the reward functions. We consider a 2-state, 2-action Markov decision process in all our experiments. This results in 8 transition probability parameters $P_i(s, a, s')\; \forall s\in \{0,1\}, a\in \{0,1\}, s' \in \{0,1\}$ for every arm $i$.
Out of these 8 parameters, we only need to define 4 parameters $P_i(s, a, s'=1)\; \forall s\in \{0,1\}, a\in \{0,1\}$ independently, and the rest 4 parameters can be calculated as the compliment values $P_i(s, a, s'=0) = 1-P_i(s, a, s'=1)\; \forall s\in \{0,1\}, a\in \{0,1\} $.

To simulate the effects of conflicting preferences and trade-offs in a controlled setup, we configure the four transition probability parameters to depend on the features describing each arm. We consider each arm to be characterized by a vector of continuous features $f$, with three values ranging between 0 and 1 ($f \in [0,1]^3$).  We use the following setup to stochastically generate the 4 transition probability parameters.
\begin{enumerate}
    \item For every arm $i$, uniformly sample the two passive probability parameters $P_i(s, a=0, s'=1) \sim \text{Uniform}(0, 1),\; \forall s\in \{0,1\}$.
    \item For every arm $i$, uniformly sample the three features $f \sim Uniform([0,1]^3)$
    \item Define a three-dimensional weight vector $W\in [0,1]^3$ and a standard deviation parameter $\sigma$.
    \item Sample the effect of intervention from the normal distribution as $\delta \sim \mathcal{N}(\Delta, \sigma),$
where $\Delta = W\cdot f^T$.
    \item Calculate active transition probabilities as $P(s, a=1, s') = P(s,a=0, s')+\delta$
    \item Calculate the complimentary probabilities as $P_i(s, a, s'=0) = 1-P_i(s, a, s'=1),\; \forall s\in \{0,1\}, a\in \{0,1\} $
\end{enumerate}

The magnitude of the weight value determines the extent to which a feature influences the effect of the intervention, while the direction indicates whether a low or high feature value amplifies this effect. The standard deviation parameter, $\sigma$, controls the spread of sampled probabilities around the mean effect of the intervention. Table \ref{tab:dataset} shows the Weight vector used for the three synthetic datasets generated in our experiments. We consider weight values that describe the following scenarios : i) all weights are roughly equal (dataset 1) ii) one of the features has much higher weight than the other two (dataset 2) and we switch which feature has maximum weight (dataset 3).
\renewcommand{\arraystretch}{1.5}
\begin{table}[]
\centering
\caption{Weight vector parameters for different synthetic datasets.}
\resizebox{0.8\columnwidth}{!}{
\begin{tabular}{|c|c|c|c|l|}
\hline
                   & \textbf{Feature A} & \textbf{Feature B} & \textbf{Feature C} & $\sigma$ \\ \hline
\textbf{Dataset 1} & 0.8                & -1.5               & 1                  & 0.1      \\ \hline
\textbf{Dataset 2} & 10                 & -1.5               & 1                  & 0.1      \\ \hline
\textbf{Dataset 3} & 1                  & -1.5               & 10                 & 0.1      \\ \hline
\end{tabular}
}
\label{tab:dataset}
\end{table}

\subsection{Real World Dataset}\label{app:dataset_rw}
As described in Section 6.1, one challenge in estimating transition probabilities under active actions is the rarity of such transitions due to the limited number of service calls. Consequently, the estimates for active transition probabilities tend to be noisy. To mitigate this issue, we follow a procedure similar to that in the synthetic domain to construct the datasets, but with some modifications. Specifically, instead of uniformly sampling passive probabilities in step 1, we use the transition probabilities estimated from real-world data \cite{DBLP:journals/aim/VermaSMVGMHTJTT23}. Next, in step 2, we use the attributes describing the real-world beneficiaries to define features. Thus, instead of features A, B and C, we use features Age, Income and Education. Finally, we use the same weight vector values as in Table \ref{tab:dataset} to generate three real world datasets. This allows us to generate multiple datasets with varying levels of effect of intervention based on features while still using the realistic passive transition probabilities and feature distributions.

\subsection{Hyperparameters}\label{app:hyper}
We run all experiments with $N=2100$ arms, $B=210$ as budget, $T=12$ weeks and $\gamma=0.9$ as discount factor. For every dataset, 5 RMAB instances are generated based on the different weight vectors as described in the section above. 
Additionally, we run each RMAB simulation with 10 different seeds. We estimate the cumulative rewards and feature-level utility by calculating the mean of the discounted sum of rewards over $T$ timesteps across all 10 seeds.

\subsection{Computing Resources}\label{app:computing}
All experiments are run on a machine with 16GB RAM and M1 chip CPU. For every RMAB instance, it took roughly 3 hours to generate 360 candidate reward functions (20 reward functions for each of the 18 prompts). The primary bottleneck in speed was the API call limits for using LLM (Gemini Pro).


\section{Error Bound in Adjudicator}
\theoremstyle{plain}


\paragraph{Multiplicative error}
Suppose instead of observing the true score $s_i(R_j) $, the Scorer Model (SCLM-SIM or SCLM-LLM) returns a noisy score estimate $\tilde{s}_i(R_j)$ under  multiplicative noise $\alpha_i^j \in (0,1]$  given as
\begin{equation}
\alpha_i^j \cdot s_i(R_j) \leq \tilde{s}_i(R_j) \leq \frac{1}{\alpha_i^j} \cdot s_i(R_j), \quad \forall i \in [l], R_j\in \mathcal{R}.
\end{equation}

Let $\mathbf{s}(R_j) = (s_1(R_j), s_2(R_j), ..., s_l(R_j))$ represent the true score vector (and similarly, $\tilde{\mathbf{s}}(R_j)$ the approximate score vector). 
If we consider $\alpha$ as the worst approximation factor across all scores, i.e., $\alpha = \min_{i \in [l], j \in [|\mathcal{R}|]} \alpha_i^j$, we can then write the vector form of the approximation as
\begin{equation}
\alpha \cdot \mathbf{s}(R_j) \leq \tilde{\mathbf{s}}(R_j) \leq \frac{1}{\alpha} \cdot \mathbf{s}(R_j), \quad \forall R_j\in \mathcal{R}.
\end{equation}
This is true because $\alpha \cdot s_i(R_j) \le \alpha_i^j \cdot s_i(R_j) $ and $1/\alpha_i^j \cdot s_i(R_j) \le 1/\alpha \cdot s_i(R_j), \; \forall i \in [l], R_j \in \mathcal{R}$.

\paragraph{Properties of Generalized p-mean}
\begin{lemma}\label{lem: social-welfare-monotonicity}
    For all strictly positive vectors $\mathbf{x} \in \mathbb{R}^N_{> 0}$, the generalized $p$-mean is:
    \begin{itemize}
        \item \textbf{positively homogeneous:} $f_p(\lambda \mathbf{x}) = \lambda f_p(\mathbf{x})$ for all $\lambda > 0$, and
        \item \textbf{monotonic:} if $\mathbf{x} \le \mathbf{y}$ element wise, then $f_p(\mathbf{x}) \le f_p(\mathbf{y})$.
    \end{itemize}
    These properties hold for all $p \in [-\infty, 1]$.
\end{lemma}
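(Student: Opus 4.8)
The plan is to prove both properties by a direct case analysis over the three regimes in the definition of $f_p$: the limiting cases $p = -\infty$ and $p = 0$, and the generic case of finite nonzero $p$. In each case the computation is elementary, relying only on the hypotheses that $\lambda > 0$ and that every coordinate is strictly positive. I would dispose of homogeneity first, as it is purely computational, and then treat monotonicity, where the only real subtlety appears.

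For positive homogeneity, I would substitute $\lambda \mathbf{x}$ into each branch of the definition. When $p = -\infty$, since $\lambda > 0$ we have $\min_i \lambda x_i = \lambda \min_i x_i$. When $p = 0$, factoring $\lambda^l$ out of the product gives $(\lambda^l \prod_i x_i)^{1/l} = \lambda (\prod_i x_i)^{1/l}$. For finite nonzero $p$, pulling $\lambda^p$ out of the sum yields $(\lambda^p \cdot \frac{1}{l}\sum_i x_i^p)^{1/p}$, and here the crucial point is that $(\lambda^p)^{1/p} = \lambda$ precisely because $\lambda > 0$; this identity would fail for a negative base. This establishes $f_p(\lambda \mathbf{x}) = \lambda f_p(\mathbf{x})$ in all cases.

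For monotonicity, assume $\mathbf{x} \le \mathbf{y}$ coordinatewise with all entries strictly positive. The cases $p = -\infty$ and $p = 0$ are immediate: the coordinatewise minimum and the product of positive reals are both monotone, and $t \mapsto t^{1/l}$ is increasing. For finite nonzero $p$, I would decompose $f_p$ as a composition of three maps: the coordinatewise power $t \mapsto t^p$, the averaging $\mathbf{z} \mapsto \frac{1}{l}\sum_i z_i$ (which is monotone), and the outer power $t \mapsto t^{1/p}$. When $p \in (0,1]$, both $t \mapsto t^p$ and $t \mapsto t^{1/p}$ are increasing on the positive reals, so monotonicity is preserved at every stage and $f_p(\mathbf{x}) \le f_p(\mathbf{y})$ follows directly.

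The main obstacle — indeed the only subtlety — is the case $p < 0$. Here the inner power $t \mapsto t^p$ is strictly decreasing on the positive reals, so $x_i \le y_i$ reverses to $x_i^p \ge y_i^p$, and hence $\frac{1}{l}\sum_i x_i^p \ge \frac{1}{l}\sum_i y_i^p$. However, since $1/p < 0$ as well, the outer map $t \mapsto t^{1/p}$ is also strictly decreasing, reversing the inequality a second time and restoring $f_p(\mathbf{x}) \le f_p(\mathbf{y})$. I would state explicitly that the two sign reversals cancel, since this is the one place where a careless argument could produce the wrong direction. Combined with the homogeneity computation, this completes the proof for all $p \in [-\infty, 1]$.
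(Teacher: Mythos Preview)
Your proposal is correct and follows essentially the same approach as the paper: a case analysis over $p = -\infty$, $p = 0$, and finite nonzero $p$, with the same elementary computations for homogeneity and the same key observation that for $p < 0$ the two decreasing power maps reverse the inequality twice. The only cosmetic difference is that the paper handles monotonicity for $p=0$ via logarithms rather than directly invoking monotonicity of the product, but the arguments are equivalent.
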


\begin{proof}
\textbf{Positive homogeneity:}

\textbf{Case 1: } $p = -\infty$

In this case:
\[
f_{-\infty}(\mathbf{x}) = \min_{i \in [N]} x_i
\]
Let $x_j = \min_{i \in [N]} x_i$. Then, for $\lambda > 0$, it must be true that $\lambda x_j = \min_{i \in [N]} \lambda x_i$. Since $f_{-\infty}(\lambda \mathbf{x}) = \lambda f_{-\infty}(\mathbf{x})$, $f_{-\infty}(\mathbf{x})$ is positive homogeneous.

\textbf{Case 2: } $p \not\in \{-\infty, 0\}$

Let $\lambda > 0$ and $\mathbf{x} = (x_1, \dots, x_N) \in \mathbb{R}^N_{>0}$. Then
\begin{align*}
f_p(\lambda \mathbf{x}) 
= \left( \frac{1}{N} \sum_{i=1}^N (\lambda x_i)^p \right)^{1/p} 
&= \left( \frac{1}{N} \sum_{i=1}^N \lambda^p x_i^p \right)^{1/p} \\
&= \left( \lambda^p \cdot \frac{1}{N} \sum_{i=1}^N x_i^p \right)^{1/p}\\
&= \lambda \left( \frac{1}{N} \sum_{i=1}^N x_i^p \right)^{1/p}
= \lambda f_p(\mathbf{x})
\end{align*}
so $f_p$ is positively homogeneous.

\textbf{Case 3: } $p = 0$

We use the definition of the geometric mean:
\begin{align*}
f_0(\lambda \mathbf{x}) 
= \left( \prod_{i=1}^N \lambda x_i \right)^{1/N} 
&= \left( \lambda^N \prod_{i=1}^N x_i \right)^{1/N} \\
&= \lambda \left( \prod_{i=1}^N x_i \right)^{1/N} 
= \lambda f_0(\mathbf{x})
\end{align*}
Thus, positive homogeneity also holds for $p = 0$.

\textbf{Monotonicity:}
Suppose $\mathbf{x}, \mathbf{y} \in \mathbb{R}^N_{>0}$ such that $x_i \le y_i$ for all $i \in [N]$.

\textbf{Case 1: } $p=-\infty$

Let $x_u = \min_{i \in [N]} x_i$ and $y_v = \min_{i \in [N]} y_i$. Because $x_u$ is minimum over $\mathbf{x}$, $x_u \le x_v$. Because $x_i \le y_i$ for all $i \in [N]$, it must be true that $x_v \le y_v $. Hence, $x_u \le y_v$ or $\min_{i \in [N]} x_i \le \min_{i \in [N]} y_i$ and thus $f_{-\infty}(\mathbf{x})$ is monotonic in $\mathbf{x}$.

\textbf{Case 2: }$p \notin \{-\infty, 0\}$

Since the power function $u \mapsto u^p$ is monotonically increasing for all $p > 0$ (and $u > 0$), we have $x_i^p \le y_i^p$ for all $i \in [N]$. Therefore,
\[
\frac{1}{N} \sum_{i=1}^N x_i^p \le \frac{1}{N} \sum_{i=1}^N y_i^p
\]
Now, since $u \mapsto u^{1/p}$ is increasing for $p > 0$, we can apply the power of $1/p$ and the inequality will be preserved because both sides are positive. Hence,
\[
\left(\frac{1}{N} \sum_{i \in [N]} x_i^p\right)^{\!1/p} \le \left(\frac{1}{N} \sum_{i \in [N]} y_i^p\right)^{\!1/p}
\]
or 
\[
f_p(\mathbf{x}) \le f_p(\mathbf{y})
\]
When $p<0$, $u \mapsto u^{p}$ is decreasing function (for u>0). Hence, we have $x_i^p \ge y_i^p$ for all $i \in [N]$.Therefore, 
\[
\frac{1}{N} \sum_{i=1}^N x_i^p \ge \frac{1}{N} \sum_{i=1}^N y_i^p
\]
However,  since $u \mapsto u^{1/p}$ is decreasing for $p < 0$, we can apply the power of $1/p$ and the inequality will be flipped. Thus, we would get the same result as before
\[
\left(\frac{1}{N} \sum_{i \in [N]} x_i^p\right)^{\!1/p} \le \left(\frac{1}{N} \sum_{i \in [N]} y_i^p\right)^{\!1/p}
\]
or 
\[
f_p(\mathbf{x}) \le f_p(\mathbf{y})
\]

\textbf{Case 3: }$p=0$

We use the fact that that logarithm is a strictly increasing function (i.e., $u<v \Leftrightarrow \log u < \log v, \forall u,v>0$), and then reduce the product to sum

\begin{align*}
\log f_0(\mathbf{x}) 
&= \frac{1}{N} \sum_{i=1}^N \log x_i 
\le \frac{1}{N} \sum_{i=1}^N \log y_i 
= \log f_0(\mathbf{y}) \\
&\Rightarrow \quad f_0(\mathbf{x}) \le f_0(\mathbf{y})
\end{align*}
Hence, the generalized p-mean function is both positively homogeneous and monotonic.
\end{proof}

\paragraph{Proof of Proposition 1:}
\begin{proof}
From our approximation definition in Inequality \ref{eq:approx}, we have
\[
\alpha \cdot \mathbf{s}(R_j) \leq \tilde{\mathbf{s}}(R_j) \leq \frac{1}{\alpha} \cdot \mathbf{s}(R_j), \quad \forall R_j\in \mathcal{R}.
\]
Because $f$ is monotonic (from Lemma \ref{lem: social-welfare-monotonicity}), we can apply $f$ on both sides of the inequality without changing the direction. Thus we get
\begin{equation}\label{eq:f_approx}
f_p(\alpha \cdot \mathbf{s}(R_j)) \leq f_p(\tilde{\mathbf{s}}(R_j)) \leq f_p(\frac{1}{\alpha} \cdot \mathbf{s}(R_j)), \quad \forall R_j\in \mathcal{R}.
\end{equation}
Next, because $f$ has positive homogeneity (from Lemma \ref{lem: social-welfare-monotonicity}) for any scalar \( \alpha > 0 \), we have:

\begin{equation}
f_p(\alpha \cdot \mathbf{s}(R_j)) = \alpha\cdot f_p(\mathbf{s}(R_j)),
\end{equation}
and similarly,

\begin{equation}
f_p\left( \frac{1}{\alpha}\cdot  \mathbf{s}(R_j) \right) = \frac{1}{\alpha} \cdot f_p(\mathbf{s}(R_j)).
\end{equation}
Combining the above three observations, we get
\begin{equation}\label{eq:error}
\alpha \cdot f_p(\mathbf{s}(R_j)) \leq f_p(\tilde{\mathbf{s}}(R_j)) \leq \frac{1}{\alpha} \cdot f_p(\mathbf{s}(R_j)), \forall R_j\in \mathcal{R}.
\end{equation}
Since the Inequality \ref{eq:error} is true for all $R_j \in \mathcal{R}$, we can use $R_j = R^*$ to get:
\begin{equation}
f_p(\tilde{\mathbf{s}}(R^*)) \geq \alpha \cdot f_p(\mathbf{s}(R^*)).
\end{equation}
Since \( \tilde{R}^* \) maximizes \( f_p(\tilde{\mathbf{s}}(R_j)) \), we have:
\begin{equation}
f_p(\tilde{\mathbf{s}}(\tilde{R}^*)) \geq f_p(\tilde{\mathbf{s}}(R^*)).
\end{equation}
Now, using the Inequality \ref{eq:error} for $R_j = \tilde{R}^*$, we get:
\begin{equation}
f_p(\mathbf{s}(\tilde{R}^*)) \geq \alpha \cdot f_p(\tilde{\mathbf{s}}(\tilde{R}^*)).
\end{equation}
Combining the above three results, we get
\begin{equation}
f_p(\mathbf{s}(\tilde{R}^*)) \geq \alpha \cdot f_p(\tilde{\mathbf{s}}(\tilde{R}^*)) \geq \alpha \cdot f_p(\tilde{\mathbf{s}}(R^*)) \geq \alpha^2 \cdot f_p(\mathbf{s}(R^*)).
\end{equation}
Thus:

\begin{align}
\frac{f_p(\mathbf{s}(R^*)) - f_p(\mathbf{s}(\tilde{R}^*))}{f_p(\mathbf{s}(R^*))} 
&\leq \frac{f_p(\mathbf{s}(R^*)) - \alpha^2 \cdot f_p(\mathbf{s}(R^*))}{f_p(\mathbf{s}(R^*))} \\
&= \frac{(1 - \alpha^2) \cdot f_p(\mathbf{s}(R^*))}{f_p(\mathbf{s}(R^*))} 
= 1 - \alpha^2
\end{align}

\begin{equation}
\textit{Relative Regret} =\frac{f_p(\mathbf{s}(R^*)) - f_p(\mathbf{s}(\tilde{R}^*))}{f_p(\mathbf{s}(R^*))} \le 1 - \alpha^2
\end{equation}
\end{proof}

\section{Additional Results}

\subsection{Composite Prioritization Prompts} \label{app:comp_Promp}

In Tables \ref{app:full_table_synth} and \ref{app:full_table_rw}, we show results aggregated for the synthetic and real world datasets. Specifically, we show the Sum of \% change in utility and Minimum of \% change in utility not just for the highest/lowest value of a feature but for top/bottom two and three buckets.  We also include results from the scorer model that optimizes for Nash Social Welfare function.

Overall, we observe that the Simulator-based scorer (SCLM-SIM) performs best in all scenarios, both when optimizing for the Utilitarian objective or the Egalitarian objective.  It is also worth noting that when optimizing for one of the objectives (for instance, Utilitarian objective maximizes the sum of \% change in utility), SCLM outperforms baselines in the objective it is not explicitly optimizing for (for instance, Minimum of \% change in utility).
Lastly, we observe that optimizing for the Nash objective yields very similar performance as optimizing for the Utilitarian objective.

\begin{table*}[]
\renewcommand{\arraystretch}{1.5}
\caption{Results summary comparing different reward function choice strategies aggregated across the three synthetic Datasets. Cells in bold indicate the top 2 best values (higher is better). DLM: Decision-Language Model, SCLM: Social Choice Language Model, SCLM-SIM: Simulation based Scorer Model, SCLM-LLM: LLM based Scorer Model}
\resizebox{\textwidth}{!}{
\begin{tabular}{|c|c|ccc|ccc|}
\hline
                & \textbf{Social Welfare Function} & \multicolumn{3}{c|}{\textbf{Minimum of \% change in utility}}                                                                              & \multicolumn{3}{c|}{\textbf{Minimum of \% change in utility}}                                                                              \\ \hline
\textbf{Method} &                                  & \multicolumn{1}{c|}{\textbf{top/bottom 1 bucket}}   & \multicolumn{1}{c|}{\textbf{top/bottom 2 buckets}}  & top/bottom 3 buckets           & \multicolumn{1}{c|}{\textbf{top/bottom 1 bucket}}   & \multicolumn{1}{c|}{\textbf{top/bottom 2 buckets}}  & \textbf{top/bottom 3 buckets}  \\ \hline
LLM-zeroshot    & \multirow{3}{*}{}                & \multicolumn{1}{c|}{-0.266$\%\pm$1.51$\%$}          & \multicolumn{1}{c|}{-0.228$\%\pm$1.3$\%$}           & -3.269$\%\pm$1.14$\%$          & \multicolumn{1}{c|}{-15.16$\%\pm$0.81$\%$}          & \multicolumn{1}{c|}{-13.249$\%\pm$0.71$\%$}         & -12.485$\%\pm$0.61$\%$         \\ \cline{1-1} \cline{3-8} 
DLM             &                                  & \multicolumn{1}{c|}{3.879$\%\pm$1.42$\%$}           & \multicolumn{1}{c|}{5.379$\%\pm$1.27$\%$}           & 1.89$\%\pm$0.99$\%$            & \multicolumn{1}{c|}{-7.844$\%\pm$0.74$\%$}          & \multicolumn{1}{c|}{-6.249$\%\pm$0.62$\%$}          & -6.761$\%\pm$0.49$\%$          \\ \cline{1-1} \cline{3-8} 
DLM-PromptEngg  &                                  & \multicolumn{1}{c|}{7.607$\%\pm$1.53$\%$}           & \multicolumn{1}{c|}{8.665$\%\pm$1.44$\%$}           & 4.087$\%\pm$1.24$\%$           & \multicolumn{1}{c|}{-9.301$\%\pm$0.82$\%$}          & \multicolumn{1}{c|}{-7.17$\%\pm$0.81$\%$}           & -7.767$\%\pm$0.71$\%$          \\ \hline
SCLM-SIM        & \multirow{2}{*}{Utilitarian}     & \multicolumn{1}{c|}{\textbf{28.944$\%\pm$2.12$\%$}} & \multicolumn{1}{c|}{\textbf{21.936$\%\pm$1.55$\%$}} & \textbf{13.654$\%\pm$1.07$\%$} & \multicolumn{1}{c|}{\textbf{-2.278$\%\pm$1.02$\%$}} & \multicolumn{1}{c|}{\textbf{-3.579$\%\pm$0.83$\%$}} & \textbf{-3.559$\%\pm$0.55$\%$} \\ \cline{1-1} \cline{3-8} 
SCLM-LLM        &                                  & \multicolumn{1}{c|}{14.348$\%\pm$1.66$\%$}          & \multicolumn{1}{c|}{10.304$\%\pm$1.22$\%$}          & 6.333$\%\pm$0.99$\%$           & \multicolumn{1}{c|}{-3.973$\%\pm$0.76$\%$}          & \multicolumn{1}{c|}{-4.428$\%\pm$0.6$\%$}           & -4.817$\%\pm$0.54$\%$          \\ \hline
SCLM-SIM        & \multirow{2}{*}{Egalitarian}     & \multicolumn{1}{c|}{16.448$\%\pm$1.95$\%$}          & \multicolumn{1}{c|}{11.425$\%\pm$1.34$\%$}          & 8.6$\%\pm$0.94$\%$             & \multicolumn{1}{c|}{\textbf{-1.176$\%\pm$1.01$\%$}} & \multicolumn{1}{c|}{\textbf{-2.028$\%\pm$0.71$\%$}} & \textbf{-1.833$\%\pm$0.49$\%$} \\ \cline{1-1} \cline{3-8} 
SCLM-LLM        &                                  & \multicolumn{1}{c|}{11.421$\%\pm$1.62$\%$}          & \multicolumn{1}{c|}{7.845$\%\pm$1.21$\%$}           & 4.141$\%\pm$0.92$\%$           & \multicolumn{1}{c|}{-4.877$\%\pm$0.73$\%$}          & \multicolumn{1}{c|}{-4.373$\%\pm$0.59$\%$}          & -4.68$\%\pm$0.5$\%$            \\ \hline
SCLM-SIM        & \multirow{2}{*}{Nash}            & \multicolumn{1}{c|}{\textbf{28.262$\%\pm$2.42$\%$}} & \multicolumn{1}{c|}{\textbf{20.416$\%\pm$1.73$\%$}} & \textbf{11.102$\%\pm$1.19$\%$} & \multicolumn{1}{c|}{-4.053$\%\pm$1.18$\%$}          & \multicolumn{1}{c|}{-5.408$\%\pm$0.89$\%$}          & -5.261$\%\pm$0.62$\%$          \\ \cline{1-1} \cline{3-8} 
SCLM-LLM        &                                  & \multicolumn{1}{c|}{9.478$\%\pm$1.68$\%$}           & \multicolumn{1}{c|}{6.608$\%\pm$1.23$\%$}           & 2.957$\%\pm$1.02$\%$           & \multicolumn{1}{c|}{-5.973$\%\pm$0.77$\%$}          & \multicolumn{1}{c|}{-5.782$\%\pm$0.6$\%$}           & -6.117$\%\pm$0.54$\%$          \\ \hline
\end{tabular}
\label{app:full_table_synth}
}
\end{table*}

\begin{table*}[]
\renewcommand{\arraystretch}{1.5}
\caption{Results summary comparing different reward function choice strategies aggregated across the three Real World Datasets. Cells in bold indicate the top 2 best values (higher is better). DLM: Decision-Language Model, SCLM: Social Choice Language Model, SCLM-SIM: Simulation based Scorer Model, SCLM-LLM: LLM based Scorer Model}
\resizebox{\textwidth}{!}{
\begin{tabular}{|c|c|ccc|ccc|}
\hline
                & \textbf{Social Welfare Function} & \multicolumn{3}{c|}{\textbf{Minimum of \% change in utility}}                                                                              & \multicolumn{3}{c|}{\textbf{Minimum of \% change in utility}}                                                                             \\ \hline
\textbf{Method} &                                  & \multicolumn{1}{c|}{\textbf{top/bottom 1 bucket}}   & \multicolumn{1}{c|}{\textbf{top/bottom 2 buckets}}  & top/bottom 3 buckets           & \multicolumn{1}{c|}{\textbf{top/bottom 1 bucket}}  & \multicolumn{1}{c|}{\textbf{top/bottom 2 buckets}}  & \textbf{top/bottom 3 buckets}  \\ \hline
LLM-zeroshot    & \multirow{3}{*}{}                & \multicolumn{1}{c|}{-0.893$\%\pm$1.38$\%$}          & \multicolumn{1}{c|}{-5.541$\%\pm$0.79$\%$}          & -10.436$\%\pm$0.46$\%$         & \multicolumn{1}{c|}{-7.285$\%\pm$0.69$\%$}         & \multicolumn{1}{c|}{-7.003$\%\pm$0.45$\%$}          & -6.971$\%\pm$0.26$\%$          \\ \cline{1-1} \cline{3-8} 
DLM             &                                  & \multicolumn{1}{c|}{6.219$\%\pm$1.59$\%$}           & \multicolumn{1}{c|}{-0.485$\%\pm$0.95$\%$}          & -8.733$\%\pm$0.43$\%$          & \multicolumn{1}{c|}{-5.317$\%\pm$0.84$\%$}         & \multicolumn{1}{c|}{-6.454$\%\pm$0.39$\%$}          & -7.674$\%\pm$0.19$\%$          \\ \cline{1-1} \cline{3-8} 
DLM-PromptEngg  &                                  & \multicolumn{1}{c|}{4.341$\%\pm$1.65$\%$}           & \multicolumn{1}{c|}{-3.57$\%\pm$0.91$\%$}           & -9.417$\%\pm$0.57$\%$          & \multicolumn{1}{c|}{-4.957$\%\pm$0.94$\%$}         & \multicolumn{1}{c|}{-7.366$\%\pm$0.5$\%$}           & -7.586$\%\pm$0.37$\%$          \\ \hline
SCLM-SIM        & \multirow{2}{*}{Utilitarian}     & \multicolumn{1}{c|}{\textbf{21.502$\%\pm$1.76$\%$}} & \multicolumn{1}{c|}{\textbf{10.643$\%\pm$1.06$\%$}} & \textbf{-3.434$\%\pm$0.53$\%$} & \multicolumn{1}{c|}{\textbf{2.206$\%\pm$0.89$\%$}} & \multicolumn{1}{c|}{\textbf{-1.203$\%\pm$0.45$\%$}} & -4.423$\%\pm$0.29$\%$          \\ \cline{1-1} \cline{3-8} 
SCLM-LLM        &                                  & \multicolumn{1}{c|}{8.711$\%\pm$1.3$\%$}            & \multicolumn{1}{c|}{-0.574$\%\pm$1.08$\%$}          & -8.285$\%\pm$0.82$\%$          & \multicolumn{1}{c|}{-6.512$\%\pm$0.63$\%$}         & \multicolumn{1}{c|}{-6.457$\%\pm$0.55$\%$}          & -6.756$\%\pm$0.43$\%$          \\ \hline
SCLM-SIM        & \multirow{2}{*}{Egalitarian}     & \multicolumn{1}{c|}{11.481$\%\pm$1.61$\%$}          & \multicolumn{1}{c|}{7.92$\%\pm$1.01$\%$}            & \textbf{-0.416$\%\pm$0.37$\%$} & \multicolumn{1}{c|}{\textbf{2.109$\%\pm$0.75$\%$}} & \multicolumn{1}{c|}{\textbf{0.287$\%\pm$0.4$\%$}}   & \textbf{-1.974$\%\pm$0.22$\%$} \\ \cline{1-1} \cline{3-8} 
SCLM-LLM        &                                  & \multicolumn{1}{c|}{8.239$\%\pm$1.67$\%$}           & \multicolumn{1}{c|}{-1.919$\%\pm$0.98$\%$}          & -9.488$\%\pm$0.48$\%$          & \multicolumn{1}{c|}{-1.561$\%\pm$0.8$\%$}          & \multicolumn{1}{c|}{-4.67$\%\pm$0.39$\%$}           & -7.059$\%\pm$0.2$\%$           \\ \hline
SCLM-SIM        & \multirow{2}{*}{Nash}            & \multicolumn{1}{c|}{\textbf{20.579$\%\pm$1.8$\%$}}  & \multicolumn{1}{c|}{\textbf{10.195$\%\pm$1.1$\%$}}  & -4.376$\%\pm$0.54$\%$          & \multicolumn{1}{c|}{2.09$\%\pm$0.9$\%$}            & \multicolumn{1}{c|}{-1.568$\%\pm$0.47$\%$}          & -4.886$\%\pm$0.29$\%$          \\ \cline{1-1} \cline{3-8} 
SCLM-LLM        &                                  & \multicolumn{1}{c|}{-0.732$\%\pm$1.25$\%$}          & \multicolumn{1}{c|}{-4.854$\%\pm$0.95$\%$}          & -10.455$\%\pm$0.68$\%$         & \multicolumn{1}{c|}{-9.877$\%\pm$0.65$\%$}         & \multicolumn{1}{c|}{-8.373$\%\pm$0.53$\%$}          & -8.107$\%\pm$0.36$\%$          \\ \hline
\end{tabular}
\label{app:full_table_rw}
}
\end{table*}

\begin{table*}
\centering
\renewcommand{\arraystretch}{1.3}
\caption{\label{app:trad}
\color{black}
Results comparing different reward function choice selection strategies for deciding tradeoff between preference alignment, unintended shift minimization and utility maximization, aggregated across the three synthetic datasets (left) and three real world datasets (right). 
A higher summed \% change in desired features implies a better alignment, whereas less unintended shift is better.
\color{black}
}
\resizebox{0.95\textwidth}{!}{
\begin{tabular}{|c|l|cc|c|cc|}
\cline{1-1} \cline{3-4} \cline{6-7}
                        &  & \multicolumn{2}{c|}{Real World}                                                                                                                                                      &  & \multicolumn{2}{c|}{Synthetic}                                                                                                                                                       \\ \cline{1-1} \cline{3-4} \cline{6-7} 
\textbf{Method}         &  & \multicolumn{1}{c|}{\textbf{\begin{tabular}[c]{@{}c@{}}Summed \% Change in\\ Desired Feature(s)\end{tabular}}} & \textbf{\begin{tabular}[c]{@{}c@{}}Unintended\\ Shift\end{tabular}} &  & \multicolumn{1}{c|}{\textbf{\begin{tabular}[c]{@{}c@{}}Summed \% Change in\\ Desired Feature(s)\end{tabular}}} & \textbf{\begin{tabular}[c]{@{}c@{}}Unintended\\ Shift\end{tabular}} \\ \cline{1-1} \cline{3-4} \cline{6-7} 
DLM-PrioritizationOnly  &  & \multicolumn{1}{c|}{6.809$\pm$0.86}                                                                            & 0.302$\pm$0.02                                                      &  & \multicolumn{1}{c|}{0.06$\pm$1.13}                                                                             & 1.15$\pm$0.04                                                       \\ \cline{1-1} \cline{3-4} \cline{6-7} 
DLM-ExtendedPrompt-Fair &  & \multicolumn{1}{c|}{-0.254$\pm$0.73}                                                                           & 0.276$\pm$0.02                                                      &  & \multicolumn{1}{c|}{-1.882$\pm$1.31}                                                                           & 1.173$\pm$0.06                                                      \\ \cline{1-1} \cline{3-4} \cline{6-7} 
SCLM-PrioritizationOnly                    &  & \multicolumn{1}{c|}{13.131$\pm$0.86}                                                                           & 0.316$\pm$0.02                                                      &  & \multicolumn{1}{c|}{9.349$\pm$1.45}                                                                            & 1.218$\pm$0.05                                                      \\ \cline{1-1} \cline{3-4} \cline{6-7} 
SCLM-ExtendedPrompt-Fair  &  & \multicolumn{1}{c|}{15.364$\pm$0.94}                                                                           & 0.099$\pm$0.01                                                      &  & \multicolumn{1}{c|}{10.006$\pm$1.13}                                                                           & 0.239$\pm$0.02                                                      \\ \cline{1-1} \cline{3-4} \cline{6-7} 
\end{tabular}
}
\label{tab:shifts}
\end{table*}

\begin{table*}
\centering
\renewcommand{\arraystretch}{1.3}
\caption{
\color{black}
Results comparing different reward function choice selection strategies for deciding tradeoff between preference alignment, unintended shift minimization and utility maximization, aggregated across the three synthetic datasets (left) and three real world datasets (right). 
A higher summed \% change in desired features implies a better alignment, whereas less unintended shift is better.
\color{black}
}
\resizebox{0.95\textwidth}{!}{
\begin{tabular}{|c|l|ccc|c|ccc|}
\cline{1-1} \cline{3-5} \cline{7-9}
                        &  & \multicolumn{3}{c|}{Real World}                                                                                                                                                                                                                                                           &  & \multicolumn{3}{c|}{Synthetic}                                                                                                                                                                                                                                                            \\ \cline{1-1} \cline{3-5} \cline{7-9} 
\textbf{Method}         &  & \multicolumn{1}{c|}{\textbf{\begin{tabular}[c]{@{}c@{}}Summed \% Change in\\ Desired Feature(s)\end{tabular}}} & \multicolumn{1}{c|}{\textbf{\begin{tabular}[c]{@{}c@{}}Normalized\\ Utility Score\end{tabular}}} & \textbf{\begin{tabular}[c]{@{}c@{}}\% Drop in\\ Utility\end{tabular}} &  & \multicolumn{1}{c|}{\textbf{\begin{tabular}[c]{@{}c@{}}Summed \% Change in\\ Desired Feature(s)\end{tabular}}} & \multicolumn{1}{c|}{\textbf{\begin{tabular}[c]{@{}c@{}}Normalized\\ Utility Score\end{tabular}}} & \textbf{\begin{tabular}[c]{@{}c@{}}\% Drop in\\ Utility\end{tabular}} \\ \cline{1-1} \cline{3-5} \cline{7-9} 
DLM-PrioritizationOnly                     &  & \multicolumn{1}{c|}{6.809$\pm$0.86}                                                                            & \multicolumn{1}{c|}{0.317$\pm$0.01}                                                              & -6.233$\pm$0.17                                                       &  & \multicolumn{1}{c|}{0.06$\pm$1.13}                                                                             & \multicolumn{1}{c|}{0.485$\pm$0.01}                                                              & -5.403$\pm$0.28                                                       \\ \cline{1-1} \cline{3-5} \cline{7-9} 
DLM-ExtendedPrompt-Util &  & \multicolumn{1}{c|}{3.416$\pm$1.04}                                                                            & \multicolumn{1}{c|}{0.347$\pm$0.02}                                                              & -5.689$\pm$0.25                                                       &  & \multicolumn{1}{c|}{-2.508$\pm$1.2}                                                                            & \multicolumn{1}{c|}{0.548$\pm$0.02}                                                              & -3.825$\pm$0.29                                                       \\ \cline{1-1} \cline{3-5} \cline{7-9} 
SCLM-PrioritizationOnly                    &  & \multicolumn{1}{c|}{13.131$\pm$0.86}                                                                           & \multicolumn{1}{c|}{0.397$\pm$0.01}                                                              & -5.372$\pm$0.19                                                       &  & \multicolumn{1}{c|}{9.349$\pm$1.45}                                                                            & \multicolumn{1}{c|}{0.614$\pm$0.01}                                                              & -2.8$\pm$0.24                                                         \\ \cline{1-1} \cline{3-5} \cline{7-9} 
SCLM-ExtendedPrompt-Util      &  & \multicolumn{1}{c|}{8.869$\pm$0.73}                                                                            & \multicolumn{1}{c|}{0.519$\pm$0.02}                                                              & -3.91$\pm$0.21                                                        &  & \multicolumn{1}{c|}{13.211$\pm$1.31}                                                                           & \multicolumn{1}{c|}{0.736$\pm$0.01}                                                              & -0.568$\pm$0.12                                                       \\ \cline{1-1} \cline{3-5} \cline{7-9} 
SCLM-Full  &  & \multicolumn{1}{c|}{11.496$\pm$0.87}                                                                           & \multicolumn{1}{c|}{0.529$\pm$0.01}                                                              & -3.85$\pm$0.18                                                        &  & \multicolumn{1}{c|}{9.166$\pm$1.05}                                                                            & \multicolumn{1}{c|}{0.705$\pm$0.01}                                                              & -1.044$\pm$0.11                                                       \\ \cline{1-1} \cline{3-5} \cline{7-9} 
\end{tabular}
}
\label{tab:shifts}
\end{table*}

\subsection{Effect of Social Choice Function}\label{app:effect_swf}

\begin{figure}[h!]
    \centering
        \includegraphics[width=0.5\textwidth]{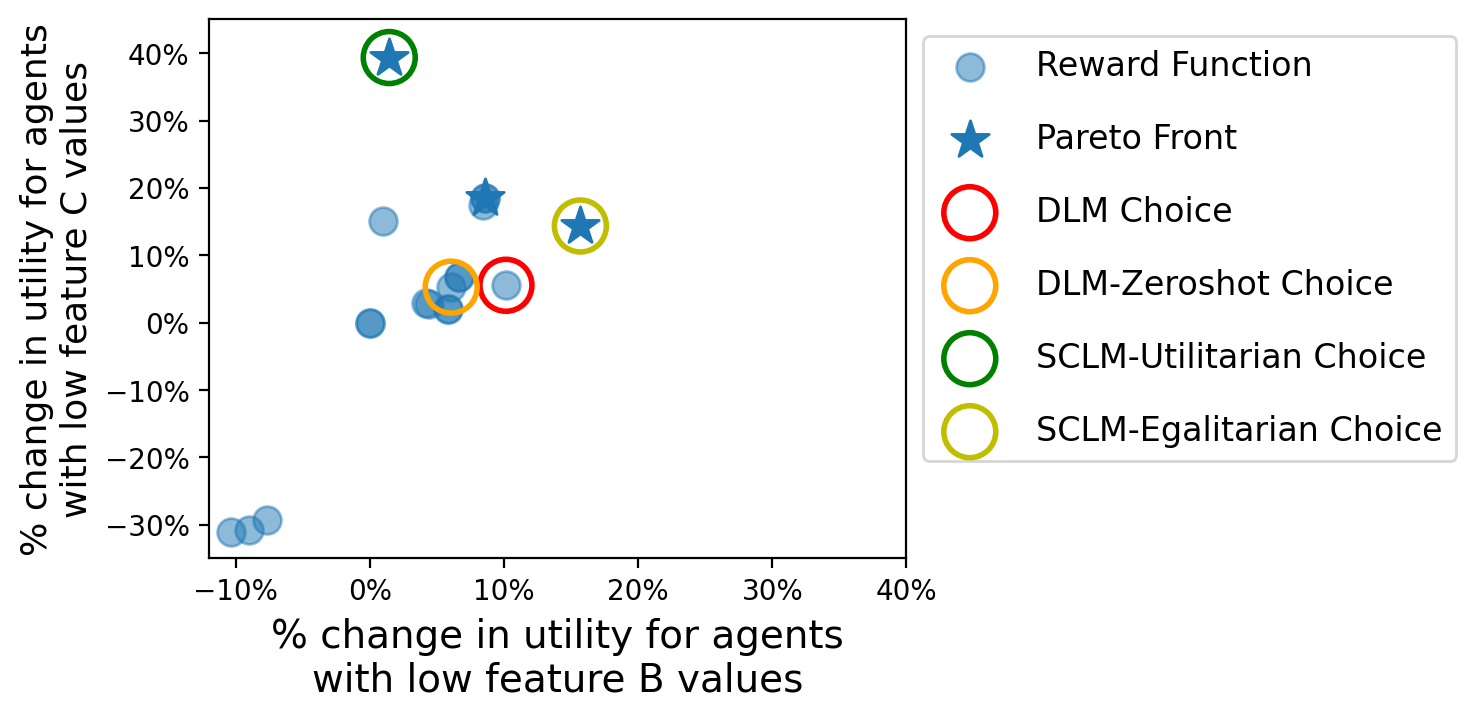}
        \caption{Comparison of different methods using the prompt ``Prioritize agents with a low value for feature $B$ and agents with a low value for feature $C$''.}
        \label{fig:example_result}
\end{figure}   
While both the Utilitarian and Egalitarian social welfare function leads to good overall results, the choice between them significantly influences which reward function is selected at the instance level. 
\Cref{fig:example_result} shows one example instance for a prompt with two prioritization clauses. As in Figure 1, each point corresponds to a candidate reward function with the axes measuring alignment with the two respective clauses. (as described in the beginning of the section 6.3). 
The SCLM-Utilitarian model chooses a reward function from the Pareto frontier that shows a much stronger effect for the second clause (i.e., the utility increase for agents with a low value of feature $C$ is more pronounced). 
In contrast, the reward function selected by
SCLM-Egalitarian is much more balanced. DLM-based baselines fail to pick a reward function from the Pareto frontier.

\subsection{Preventing Ineffective Resource Use}
We additionally consider situations where the user specifies prioritization prompts for multiple attributes but is still interested in maximizing the total generated utility.
As discussed at the end of \Cref{ad}, our framework can handle such settings by adding utility maximization as an additional evaluation dimension. For all 6 singular and 12 composite prioritization prompts in our tests, we add additional clauses to maximize total utility and call the resulting model SCLM-ExtendedPrompt-Util. As a baseline, we consider three models: DLM/SCLM prompted only by the prioritization clause (DLM/SCLM-Prioritization Only) and DLM prompted by both the prioritization clause(s) and a clause requesting the maximization of overall utility (DLM-ExtendedPrompt-Util).

As before, to assess alignment with prioritization clauses, we calculate the average change in utility for the prioritized subgroup. To quantify the performance of a reward function with regards to maximizing utility, we
calculate the percent drop in utility as compared to the default reward function.

Table 6 shows the results. 
Comparing \emph{DLM-PrioritizationOnly} and \emph{DLM-ExtendedPrompt-Util}, we again observe that adding the additional objective to the prompt does not result in a better performance in real-world domain and synthetic domain. 
In contrast, SCLM-ExtendedPrompt-Util which incorporates utility as a scoring dimension chooses reward functions resulting in a significantly higher utility increases for the prioritized subpopulations and a significantly higher total utility (i.e., there is a lower drop in utility).

\paragraph{Combining Prioritization, Unintended Shifts, and Utility Drops}
Finally, we consider a variant of SCLM which both prevents unintended shifts and utility drops, i.e., in addition to the specified prioritization clauses, we add clause(s) on the minimization of shifts and a clause on the minimization of utility drops (which results in having a total of 4 evaluation dimensions; one per feature plus the utility drop one). We use the simulator scorer together with the utilitarian welfare function and call the resulting model SCLM-Full. We see in the last row of \Cref{tab:shifts} that this model chooses balanced reward functions with high \% change in utility in desired attributes, high overall utility, and low unintended shift, showcasing that SCLM can flexibly allow the end-user to combine multiple evaluation dimensions.

\color{black}

\section{Prompt Texts}\label{app:prompt}
In Figures \ref{app:prompt_gen_synth}-\ref{app:prompt_choose_EP_util_rw_contd}, we show the prompts passed to LLM for various experiments described in the paper. Specifically, prompt in Figures \ref{app:prompt_gen_synth},\ref{app:prompt_gen_rw} show how to generate a reward function in code form based on the problem description, the indices of features, and preference goals for synthetic and real-world problem domains, respectively.  Figures \ref{app:prompt_choose_synth},
\ref{app:prompt_choose_rw} show how the LLM is prompted to select the best reward function from those generated in the previous step for synthetic and real-world domains, respectively. Together, these prompts establish the DLM baseline in the paper.
LLM-Zeroshot baseline has the exact same prompt as Figures \ref{app:prompt_gen_synth}, \ref{app:prompt_gen_rw} for synthetic and real world domains. The only difference is that there is no reflection step, and the first reward generated by LLM is chosen as the best reward.

In Figure \ref{app:prompt_choose_EP_rw} and \ref{app:prompt_choose_EP_rw_contd}, we show how the prompt is enhanced with additional information to assist in selecting a reward function. The additional information is highlighted in bold.
Section 6.4 explains how SCLM is used to specify additional objectives, such as minimizing utility shifts in features not included in the preference prompt and maximizing overall utility. In Figures \ref{app:prompt_choose_EP_shift_rw}, \ref{app:prompt_choose_EP_shift_rw_contd} we show the prompts for the DLM-EP baselines, which explicitly include instructions to minimize utility shifts in unintended features. In Figures \ref{app:prompt_choose_EP_util_rw}, \ref{app:prompt_choose_EP_util_rw_contd}, we show the prompts that explicitly include instructions for maximizing overall utility.
\begin{figure*}[ht]
\centering
\begin{conversation}[label={box:rw_generator}]{Generator: Synthetic Domain}{}
\vspace{0.5em}
\large \textbf{Prompt} \\
\small
Create a Python reward function for RL in resource allocation problem for agents, with the objective of prioritizing higher states and: Focus on the agents with low 
value of feature A.. The function should use 'state' (value is either 0,1) and features 'agent\_feats' (length 15 array) to direct the RL agent. Here is a description
of the features you may use along with the index in 'agent\_feats' array: 
\newline
Index Name DataType\\
 0. Feature A bucket 1 - Binary\\
 1. Feature A bucket 2 - Binary\\
 2. Feature A bucket 3 - Binary\\
 3. Feature A bucket 4 - Binary\\
 4. Feature A bucket 5 - Binary\\
 5. Feature B bucket 1 - Binary\\
 6. Feature B bucket 2 - Binary\\
 7. Feature B bucket 3 - Binary\\
 8. Feature B bucket 4 - Binary\\
 9. Feature B bucket 5 - Binary\\
 10. Feature C bucket 1 - Binary\\
 11. Feature C bucket 2 - Binary\\
 12. Feature C bucket 3 - Binary\\
 13. Feature C bucket 4 - Binary\\
 14. Feature C bucket 5 - Binary\\
All buckets are in increasing order of the feature values. For example, 'Feature A bucket 1' would consist of bucket of lowest values of feature A while 'Feature A 
bucket 5' would consist of highest value of feature A. This is true for Feature A, B and C.\\
\newline
Your task:
1. Write a simple, single-line Python reward function. Exclude the word 'return' and exclude non-standard libraries. Format your code with triple \$ signs: \$\$\$[YOUR 
FUNCTION]\$\$\$. 
Note that HIGHER states are always preferred, so ensure reward increases as state increases. Make sure reward is always positive and increasing with state. \\
\newline
Example Prompt: Prioritize agents that have low feature A and high feature C\\

Example Response:
Python Code: '\$\$\$ state+state * ((agent\_feats[0] or agent\_feats[1]) and (agent\_feats[17] or agent\_feats[18] or agent\_feats[19])) \$\$\$'
or '\$\$\$ state * (agent\_feats[0] or 3*agent\_feats[19]) \$\$\$'
or '\$\$\$ state + 2*state * ((5*agent\_feats[0]+agent\_feats[1]) and agent\_feats[19]) \$\$\$'
In these example, agent\_feats[0] and agent\_feats[1] represent agents with low values for feature A and agent\_feats[17], agent\_feats[18], agent\_feats[19] represent 
agents with high values for feature C\\
It is upto you to decide which features will represent a preference. For example low values could be the lowest feature bucket, or lower three feature buckets or so 
on. Come up with a unique new reward for the specified goal: Focus on the agents with low value of feature A..
Goal: 
Focus on the agents with low value of feature A.

\large \textbf{Output}\\
\small
\$\$\$ 2*state + state * (1*agent\_feats[0]+ 0.5*agent\_feats[1]) \$\$\$
\end{conversation}
\caption{Prompt passed to the LLM to generate reward function based on the problem scenarios in the Synthetic Domain.}
\label{app:prompt_gen_synth}
\end{figure*}

\begin{figure*}[ht]
\centering
\begin{conversation}[label={box:rw_generator}]{Generator: Real World Domain}{}
\vspace{0.5em}
\large \textbf{Prompt} \\
\small
Create a Python reward function for RL in phone call resource allocation to mothers in India, with the objective of prioritizing higher states and: Focus on the 
young mothers by age. The function should use 'state' (value is either 0,1) and features agent feats (length 43 array) to direct the RL agent. Here is a 
description of the features you may use along with the index in agent feats array: \\
\newline
Index Name DataType\\
 0. Ages 10-20 - Binary\\
 1. Ages 21-30 - Binary\\
 2. Ages 31-40 - Binary\\
 3. Ages 41-50 - Binary\\
 4. Ages 51-60 - Binary\\
 5. Education level 1/7 -- illiterate - Binary\\
 6. Education level 2/7 -- 1-5th Grade Completed - Binary\\
 7. Education level 3/7 -- 6-9th Grade Completed - Binary\\
 8. Education level 4/7 -- 10th Grade Passed - Binary\\
 9. Education level 5/7 -- 12th Grade Passed - Binary\\
 10. Education level 6/7 -- Graduate - Binary\\
 11. Education level 7/7 -- Post graduate - Binary\\
 12. Income bracket 1 (e.g., 0-5000) - Binary\\
 13. Income bracket 2 (e.g., 5001-10000) - Binary\\
 14. Income bracket 3 (e.g., 10001-15000) - Binary\\
 15. Income bracket 4 (e.g., 15001-20000) - Binary\\
 16. Income bracket 5 (e.g., 20001-25000) - Binary\\
 17. Income bracket 6 (e.g., 25001-30000) - Binary\\
 18. Income bracket 7 (e.g., 30000-999999) - Binary\\
 \newline
\textbf{Your task:}\\
Write a simple, single-line Python reward function. Exclude the word 'return' and exclude non-standard libraries. Format your code with triple \$ signs: \$\$\$[YOUR 
FUNCTION]\$\$\$. 
Note that HIGHER states are always preferred, so ensure reward increases as state increases. Make sure reward is always positive and increasing with state. 
Example Prompt: Prioritize agents that are older and rich\\
Example Response:\\
Python Code: '\$\$\$ state * (agent\_feats[4] and agent\_feats[18]) \$\$\$'\\
Come up with a unique new reward for the specified goal: Focus on the young mothers by age.
Goal: 
Focus on the young mothers by age\\
\newline
\large \textbf{Output}\\
\small
\$\$\$ state * agent\_feats[7] \$\$\$
\end{conversation}
\caption{Prompt passed to the LLM to generate reward function based on the problem scenarios in the Real World Domain.}
\label{app:prompt_gen_rw}
\end{figure*}

\begin{figure*}[ht]
\centering
\begin{conversation}[label={box:rw_generator}]{DLM Choice: Synthetic Domain}{}
\vspace{0.5em}
\large \textbf{Prompt} \\
\small
My goal was to create a Python reward function for RL in resource allocation, with the objective of: Focus on the agents with low value of feature A. I tried several
reward functions for this task. 

Below are the reward functions I used and their corresponding reward distributions:

\textbf{Function Number 0: }
Reward Function:  2*state + state * (1*agent\_feats[0]+ 0.5*agent\_feats[1])  
\newline
Reflection:
 '

Category: A
Feature A bucket 1: 113.20\\
Feature A bucket 2: 137.07\\
Feature A bucket 3: 56.51\\
Feature A bucket 4: 56.82\\
Feature A bucket 5: 54.60\\

Category: B
Feature B bucket 1: 82.89\\
Feature B bucket 2: 65.36\\
Feature B bucket 3: 60.33\\
Feature B bucket 4: 50.13\\
Feature B bucket 5: 46.00\\

Category: C
Feature C Bucket 1: 46.78\\
Feature C Bucket 2: 49.96\\
Feature C Bucket 3: 46.64\\
Feature C Bucket 4: 66.58\\
Feature C Bucket 5: 62.24\\

\textbf{Function Number 1: }
Reward Function:  state * (agent\_feats[0] and not (agent\_feats[1] or agent\_feats[2]))   
\newline
Reflection:
 '
Category: A
Feature A bucket 1: 177.84\\
Feature A bucket 2: 54.76\\
Feature A bucket 3: 55.93\\
Feature A bucket 4: 57.00\\
Feature A bucket 5: 55.29\\

Category: B
Feature B bucket 1: 64.47\\
Feature B bucket 2: 58.76\\
Feature B bucket 3: 56.96\\
Feature B bucket 4: 51.11\\
Feature B bucket 5: 50.13\\

Category: C
Feature C Bucket 1: 49.80\\
Feature C Bucket 2: 51.69\\
Feature C Bucket 3: 48.40\\
Feature C Bucket 4: 65.89\\
Feature C Bucket 5: 60.00\\

Based on the above reward distributions and the given goal: Focus on those with high education., please identify the FUNCTION NUMBER of the most effective reward 
function. Provide your answer EXACTLY IN the following format: 'The best reward function is at number: [FUNCTION NUMBER]'.\\
\newline
\large \textbf{Output: }
\newline
\small
The best reward function is at number: 1
\end{conversation}
\caption{Prompt passed to the LLM to choose a reward function based on the context of problem scenario in the Synthetic Domain, the generated reward functions and the reward distribution corresponding to every reward function.}
\label{app:prompt_choose_synth}
\end{figure*}

\begin{figure*}[ht]
\centering
\begin{conversation}[label={box:rw_generator}]{DLM Choice: Real World Domain}{}
\vspace{0.5em}
\large \textbf{Prompt} \\
\small
My goal was to create a Python reward function for RL in resource allocation, with the objective of: Focus on those with high education. I tried several reward 
functions for this task. Below, I have the given reward function, and the corresponding distribution of reward achieved across 20 agent features. 

Below are the reward functions I used and their corresponding reward distributions:

\textbf{Function Number 0: }
Reward Function:  -agent\_feats[5] -agent\_feats[6]-agent\_feats[7]-agent\_feats[8]-agent\_feats[9]-agent\_feats[10]-agent\_feats[11]  
\newline
Reflection:
 '

Category: Age
Ages 10-20: 121.73\\
Ages 21-30: 421.04\\
Ages 31-40: 244.49\\
Ages 41-50: 64.11\\
Ages 51-60: 10.58\\

Category: Income
Income bracket 1 (e.g., 0-5000): 126.82\\
Income bracket 2 (e.g., 5001-10000): 373.62\\
Income bracket 3 (e.g., 10001-15000): 234.87\\
Income bracket 4 (e.g., 15001-20000): 77.40\\
Income bracket 5 (e.g., 20001-25000): 35.58\\
Income bracket 6 (e.g., 25001-30000): 2.58\\
Income bracket 7 (e.g., 30000-999999): 11.09\\

Category: Education
Illiterate: 39.91\\
1-5th Grade Completed: 157.84\\
6-9th Grade Completed: 281.36\\
10th Grade Passed: 197.64\\
12th Grade Passed: 103.18\\
Graduate: 21.13\\
Post graduate: 60.89\\

\textbf{Function Number 1: }
Reward Function:  state * agent\_feats[10]  
\newline
Reflection:
 '
Category: Age
Ages 10-20: 134.22\\
Ages 21-30: 469.16\\
Ages 31-40: 270.44\\
Ages 41-50: 72.80\\
Ages 51-60: 11.96\\

Category: Income
Income bracket 1 (e.g., 0-5000): 138.40\\
Income bracket 2 (e.g., 5001-10000): 414.44\\
Income bracket 3 (e.g., 10001-15000): 266.44\\
Income bracket 4 (e.g., 15001-20000): 85.33\\
Income bracket 5 (e.g., 20001-25000): 40.20\\
Income bracket 6 (e.g., 25001-30000): 2.80\\
Income bracket 7 (e.g., 30000-999999): 10.96\\

Category: Education
Illiterate: 45.07\\
1-5th Grade Completed: 173.82\\
6-9th Grade Completed: 314.07\\
10th Grade Passed: 217.31\\
12th Grade Passed: 113.02\\
Graduate: 29.36\\
Post graduate: 65.93\\

Based on the above reward distributions and the given goal: Focus on those with high education., please identify the FUNCTION NUMBER of the most effective reward 
function. Provide your answer EXACTLY IN the following format: 'The best reward function is at number: [FUNCTION NUMBER]'.\\
\newline
\large \textbf{Output: }
\newline
\small
The best reward function is at number: 1
\end{conversation}
\caption{Prompt passed to the LLM to choose a reward function based on the context of problem scenario in Real World Domain, the generated reward functions and the reward distribution corresponding to every reward function.}
\label{app:prompt_choose_rw}
\end{figure*}

\begin{figure*}[ht]
\centering
\begin{conversation}[label={box:rw_generator}]{DLM Choice with Prompt Engineering (DLM-PromptEngg): Real World Domain}{}
\vspace{0.5em}
\large \textbf{Prompt} \\
\small
My goal was to create a Python reward function for RL in resource allocation, with the objective of: Focus on those with high education. I tried several reward 
functions for this task. Below, I have the given reward function, and the corresponding distribution of reward achieved across 20 agent features. 

Below are the reward functions I used and their corresponding reward distributions:

\textbf{Function Number 0: }
Reward Function:  -agent\_feats[5] -agent\_feats[6]-agent\_feats[7]-agent\_feats[8]-agent\_feats[9]-agent\_feats[10]-agent\_feats[11]  
\newline
Reflection:
 '

Category: Age
Ages 10-20: 121.73\\
Ages 21-30: 421.04\\
Ages 31-40: 244.49\\
Ages 41-50: 64.11\\
Ages 51-60: 10.58\\

Category: Income
Income bracket 1 (e.g., 0-5000): 126.82\\
Income bracket 2 (e.g., 5001-10000): 373.62\\
Income bracket 3 (e.g., 10001-15000): 234.87\\
Income bracket 4 (e.g., 15001-20000): 77.40\\
Income bracket 5 (e.g., 20001-25000): 35.58\\
Income bracket 6 (e.g., 25001-30000): 2.58\\
Income bracket 7 (e.g., 30000-999999): 11.09\\

Category: Education
Illiterate: 39.91\\
1-5th Grade Completed: 157.84\\
6-9th Grade Completed: 281.36\\
10th Grade Passed: 197.64\\
12th Grade Passed: 103.18\\
Graduate: 21.13\\
Post graduate: 60.89\\

\textbf{Function Number 1: }
Reward Function:  state * agent\_feats[10]  
\newline
Reflection:
 '
Category: Age
Ages 10-20: 134.22\\
Ages 21-30: 469.16\\
Ages 31-40: 270.44\\
Ages 41-50: 72.80\\
Ages 51-60: 11.96\\

Category: Income
Income bracket 1 (e.g., 0-5000): 138.40\\
Income bracket 2 (e.g., 5001-10000): 414.44\\
Income bracket 3 (e.g., 10001-15000): 266.44\\
Income bracket 4 (e.g., 15001-20000): 85.33\\
Income bracket 5 (e.g., 20001-25000): 40.20\\
Income bracket 6 (e.g., 25001-30000): 2.80\\
Income bracket 7 (e.g., 30000-999999): 10.96\\

Category: Education
Illiterate: 45.07\\
1-5th Grade Completed: 173.82\\
6-9th Grade Completed: 314.07\\
10th Grade Passed: 217.31\\
12th Grade Passed: 113.02\\
Graduate: 29.36\\
Post graduate: 65.93\\

\end{conversation}
\caption{Enhanced Prompt passed to the LLM to choose a reward function based on the context of the problem scenario in the Real World Domain, the generated reward functions, the reward distributions corresponding to every reward function and additional examples on what to look at when choosing a reward function aligned with the preference.}
\label{app:prompt_choose_EP_rw}
\end{figure*}

\begin{figure*}[ht]
\centering
\begin{conversation}{}
\vspace{0.5em}

Based on the above reward distributions and the given goal: Focus on the those with low education., please identify the FUNCTION NUMBER of the most effective reward 
function. 
\textbf{You can look at the reward distributions for different features and based on them, judge the effectiveness of the correponding reward function.
For instance, if the query wants to prioritize low income agents, you should look if the rewards are indeed high for low income features. it is upto you to decide which features describe low income preference.}
Provide your answer EXACTLY IN the following format: 'The best reward 
function is at number: [FUNCTION NUMBER]'..\\
\newline
\large \textbf{Output: }
\newline
\small
The best reward function is at number: 1
\end{conversation}
\caption{Enhanced Prompt passed to the LLM to choose a reward function based on the context of the problem scenario in the Real World Domain, the generated reward functions, the reward distributions corresponding to every reward function and additional examples on what to look at when choosing a reward function aligned with the preference.}
\label{app:prompt_choose_EP_rw_contd}
\end{figure*}

\begin{figure*}[ht]
\centering
\begin{conversation}[label={box:rw_generator}]{DLM Choice with Extended Prompt for Minimizing Utility Shifts (DLM-EP): Real World Domain}{}
\vspace{0.5em}
\large \textbf{Prompt} \\
\small
My goal was to create a Python reward function for RL in resource allocation, with the objective of: \textbf{Focus on the young mothers by age and also focus on those with low education}. I tried several reward 
functions for this task. Below, I have the given reward function, and the corresponding distribution of reward achieved across 20 agent features. 

Below are the reward functions I used and their corresponding reward distributions:\\
\newline
\textbf{Function Number 0: }
Reward Function: state * (agent\_feats[0] or agent\_feats[1]) and (agent\_feats[5] or agent\_feats[6])    
\newline
Reflection:
 '

Category: Age
Ages 10-20: 163.24\\
Ages 21-30: 547.98\\
Ages 31-40: 269.78\\
Ages 41-50: 72.11\\
Ages 51-60: 10.91\\

Category: Income
Income bracket 1 (e.g., 0-5000): 154.40\\
Income bracket 2 (e.g., 5001-10000): 472.98\\
Income bracket 3 (e.g., 10001-15000): 293.53\\
Income bracket 4 (e.g., 15001-20000): 89.82\\
Income bracket 5 (e.g., 20001-25000): 40.84\\
Income bracket 6 (e.g., 25001-30000): 2.91\\
Income bracket 7 (e.g., 30000-999999): 9.53\\

Category: Education
Illiterate: 66.47\\
1-5th Grade Completed: 257.87\\
6-9th Grade Completed: 312.69\\
10th Grade Passed: 224.22\\
12th Grade Passed: 113.53\\
Graduate: 23.42\\
Post graduate: 65.82\\

\textbf{Function Number 1: }
Reward Function:  state * (agent\_feats[0] or agent\_feats[1]) * (agent\_feats[5] or agent\_feats[6])   
\newline
Reflection:
 '
Category: Age
Ages 10-20: 163.24\\
Ages 21-30: 547.98\\
Ages 31-40: 269.78\\
Ages 41-50: 72.11\\
Ages 51-60: 10.91\\

Category: Income
Income bracket 1 (e.g., 0-5000): 154.40\\
Income bracket 2 (e.g., 5001-10000): 472.98\\
Income bracket 3 (e.g., 10001-15000): 293.53\\
Income bracket 4 (e.g., 15001-20000): 89.82\\
Income bracket 5 (e.g., 20001-25000): 40.84\\
Income bracket 6 (e.g., 25001-30000): 2.91\\
Income bracket 7 (e.g., 30000-999999): 9.53\\

Category: Education
Illiterate: 66.47\\
1-5th Grade Completed: 257.87\\
6-9th Grade Completed: 312.69\\
10th Grade Passed: 224.22\\
12th Grade Passed: 113.53\\
Graduate: 23.42\\
Post graduate: 65.82\\
\textbf{Additional Information - Rewards from Default reward function} \textit{(Reward distribution from Default reward function. Truncated for brevity.)}

\end{conversation}
\caption{Enhanced Prompt passed to the LLM to choose a reward function based on the context of the problem scenario in the Real World Domain, the generated reward functions, the reward distributions corresponding to every reward function and additional information to minimize the unintended utility shifts in dimensions not specified in the preference.}
\label{app:prompt_choose_EP_shift_rw}
\end{figure*}

\begin{figure*}[ht]
\centering
\begin{conversation}{}
\vspace{0.5em}

Based on the above reward distributions and the given goal: \textbf{Focus on the young mothers by age and also focus on those with low education}, please identify the FUNCTION NUMBER of the most effective reward 
function. 
\textbf{Also make sure that that you choose a reward function that does not cause unintended shifts in reward. Unintended shifts in reward here means that the chosen reward function shouldn't drastically change the distribution
in reward with respect to features not specified in the prompt
For example, if the prompt is to prefer agents with low education, then the chosen reward function shouldn't change the distribution in reward w.r.t the default 
reward distribution too much in the income feature buckets
}.
Provide your answer EXACTLY IN the following format: 'The best reward 
function is at number: [FUNCTION NUMBER]'.\\
\newline
\large \textbf{Output: }
\newline
\small
The best reward function is at number: 1
\end{conversation}
\caption{Continued: Enhanced Prompt passed to the LLM to choose a reward function based on the context of the problem scenario in the Real World Domain, the generated reward functions, the reward distributions corresponding to every reward function and additional information to minimize the unintended utility shifts in dimensions not specified in the preference.}
\label{app:prompt_choose_EP_shift_rw_contd}
\end{figure*}

\begin{figure*}[ht]
\centering
\begin{conversation}[label={box:rw_generator}]{DLM Choice with Extended Prompt for Maximizing Overall Utility (DLM-EP): Real World Domain}{}
\vspace{0.5em}
\large \textbf{Prompt} \\
\small
My goal was to create a Python reward function for RL in resource allocation, with the objective of: \textbf{Focus on the young mothers by age and also focus on those with low education}. I tried several reward 
functions for this task. Below, I have the given reward function, and the corresponding distribution of reward achieved across 20 agent features. 

Below are the reward functions I used and their corresponding reward distributions:\\
\newline
\textbf{Function Number 0: }
Reward Function: state * (agent\_feats[0] or agent\_feats[1]) and (agent\_feats[5] or agent\_feats[6])    
\newline
Reflection:
 '

Category: Age
Ages 10-20: 163.24\\
Ages 21-30: 547.98\\
Ages 31-40: 269.78\\
Ages 41-50: 72.11\\
Ages 51-60: 10.91\\

Category: Income
Income bracket 1 (e.g., 0-5000): 154.40\\
Income bracket 2 (e.g., 5001-10000): 472.98\\
Income bracket 3 (e.g., 10001-15000): 293.53\\
Income bracket 4 (e.g., 15001-20000): 89.82\\
Income bracket 5 (e.g., 20001-25000): 40.84\\
Income bracket 6 (e.g., 25001-30000): 2.91\\
Income bracket 7 (e.g., 30000-999999): 9.53\\

Category: Education
Illiterate: 66.47\\
1-5th Grade Completed: 257.87\\
6-9th Grade Completed: 312.69\\
10th Grade Passed: 224.22\\
12th Grade Passed: 113.53\\
Graduate: 23.42\\
Post graduate: 65.82\\

\textbf{Function Number 1: }
Reward Function:  state * (agent\_feats[0] or agent\_feats[1]) * (agent\_feats[5] or agent\_feats[6])    
\newline
Reflection:
 '
Category: Age
Ages 10-20: 163.24\\
Ages 21-30: 547.98\\
Ages 31-40: 269.78\\
Ages 41-50: 72.11\\
Ages 51-60: 10.91\\

Category: Income
Income bracket 1 (e.g., 0-5000): 154.40\\
Income bracket 2 (e.g., 5001-10000): 472.98\\
Income bracket 3 (e.g., 10001-15000): 293.53\\
Income bracket 4 (e.g., 15001-20000): 89.82\\
Income bracket 5 (e.g., 20001-25000): 40.84\\
Income bracket 6 (e.g., 25001-30000): 2.91\\
Income bracket 7 (e.g., 30000-999999): 9.53\\

Category: Education
Illiterate: 66.47\\
1-5th Grade Completed: 257.87\\
6-9th Grade Completed: 312.69\\
10th Grade Passed: 224.22\\
12th Grade Passed: 113.53\\
Graduate: 23.42\\
Post graduate: 65.82\\
\end{conversation}
\caption{Enhanced Prompt passed to the LLM to choose a reward function based on the context of the problem scenario in the Real World Domain, the generated reward functions, the reward distributions corresponding to every reward function and additional information to maximize the overall utility.}
\label{app:prompt_choose_EP_util_rw}
\end{figure*}

\begin{figure*}[ht]
\centering
\begin{conversation}{}
\vspace{0.5em}

Based on the above reward distributions and the given goal: \textbf{Focus on the young mothers by age and also focus on those with low education}, please identify the FUNCTION NUMBER of the most effective reward 
function. 
\textbf{Also make sure that that you choose a reward function which also maximizes the total reward. You can calculate this by adding up rewards in each feature 
bucket.}.
Provide your answer EXACTLY IN the following format: 'The best reward 
function is at number: [FUNCTION NUMBER]'.\\
\newline
\large \textbf{Output: }
\newline
\small
The best reward function is at number: 1
\end{conversation}
\caption{Continued: Enhanced Prompt passed to the LLM to choose a reward function based on the context of the problem scenario in the Real World Domain, the generated reward functions, the reward distributions corresponding to every reward function and additional information to maximize the overall utility.}
\label{app:prompt_choose_EP_util_rw_contd}
\end{figure*}

\end{document}